\documentclass{article}

 \usepackage[preprint]{neurips_2025}

\usepackage[utf8]{inputenc} 
\usepackage[T1]{fontenc}    
\usepackage{hyperref}       
\usepackage{url}            
\usepackage{booktabs}       
\usepackage{amsfonts}       
\usepackage{nicefrac}       
\usepackage{microtype}      
\usepackage{xcolor}         
\usepackage{bm}
\usepackage{amsmath}
\usepackage{amssymb}
\usepackage{mathtools}
\usepackage{amsthm}
\usepackage{caption}
\theoremstyle{definition} 
\newtheorem{definition}{Definition}[section] 
\newtheorem{proposition}{Proposition}[section]
\newtheorem{theorem}{Theorem}[section]
\newtheorem{corollary}{Corollary}[section]
\newtheorem{lemma}{Lemma}[section]
\usepackage{mathrsfs}
\usepackage{diagbox}
\title{Beyond Surrogates: A Quantitative Analysis for Inter-Metric Relationships}

\author{%
  Yuanhao Pu$^{1,3}$, Defu Lian\thanks{Corresponding Author} $\ ^{2,3}$, Enhong Chen$^{2,3}$\\
  $^1$School of Artificial Intelligence \& Data Science, University of Science and Technology of China \\
  $^2$School of Computer Science \& Technology, University of Science and Technology of China\\
  $^3$ State Key Laboratory of Cognitive Intelligence, China\\
  \texttt{puyuanhao@mail.ustc.edu.cn, \{liandefu,cheneh\}@ustc.edu.cn} \\
}

\begin{document}

\maketitle

\begin{abstract}
The \textbf{Consistency} property between surrogate losses and evaluation metrics has been extensively studied to ensure that minimizing a loss leads to metric optimality. However, the direct relationship between different evaluation metrics remains significantly underexplored. This theoretical gap results in the "Metric Mismatch" frequently observed in industrial applications, where gains in offline validation metrics fail to translate into online performance. To bridge this disconnection, this paper proposes a unified theoretical framework designed to quantify the relationships between metrics. We categorize metrics into different classes to facilitate a comparative analysis across different mathematical forms and interrogates these relationships through \textbf{Bayes-Optimal Set} and \textbf{Regret Transfer}. Through this framework, we provide a new perspective on identifying the structural asymmetry in regret transfer, enabling the design of evaluation systems that are theoretically guaranteed to align offline improvements with online objectives.
\end{abstract}

\section{Introduction}
\label{sec:intro}

Modern machine learning heavily relies on the paradigm of metric-driven optimization. In a typical supervised learning scenario, researchers select an evaluation metric $\mathcal{M}$ that aligns with the practical objective, and minimize a differentiable surrogate loss $\mathcal{L}$ that serves as a proxy for $\mathcal{M}$. 
A fundamental assumption underlying this paradigm is \textit{monotonicity}: a reduction in the risk of $\mathcal{L}$ is expected to translate into an improvement in $\mathcal{M}$. This is theoretically supported by \textbf{Bayes-consistency} \citep{bartlett2006convexity,tewari2007consistency}, which guarantees that minimizing $\mathcal{L}$ leads to the Bayes-optimal decision for $\mathcal{M}$. Even under distribution shift, theoretical guarantees on domain-invariant learning\citep{blitzer2007domainadapt, bendavid2010domain} implicitly rely on the assumption that this structural consistency persists across domains, which is presumed to enable offline optimization to translate into performance gains in actual deployments.

A common expectation is that if multiple metrics $\{\mathcal{M}_A, \mathcal{M}_B,\cdots\}$ are all Bayes-consistent with the same surrogate $\mathcal{L}$, their empirical performance should be mutually aligned along the optimization trajectory. However, this idealization frequently collapses in industrial practice, leading to a \textbf{"Metric Mismatch"} for online application systems\citep{gilotte2018offline, krauth2020offline} . This misalignment stems from two fundamental hurdles: First, establishing Bayes-consistency is \textit{non-trivial}\citep{ravikumar2011ndcg, buffoni2011learning, calauzenes2012non} and often mathematically intractable for complex industrial losses (which incorporate various heuristics and regularizations) relative to metrics with intricate physical meanings. Second, Bayes-consistency is an \textit{asymptotic} property that fails to characterize the rate of convergence or the structural sensitivity of metrics\citep{ciliberto2020general}. Due to profound structural differences, metrics respond disparately to the minimization of $\mathcal{L}$; for instance, in ranking tasks\citep{rudin2009pnorm}, a model may achieve a higher AUC by refining the relative ordering of tail items, yet simultaneously degrade the top-ranked results, leading to a decline in top-heavy metrics like NDCG@$k$ or online click-through rates. 

While the Bayes-consistency between surrogate losses and target metrics has been extensively studied\citep{zhang2004statistical, zhang2004statistical2, agarwal2014surrogate,yang2020consistency, gao2015consistency}, theoretical exploration of the direct structural relationships between metrics remains underexplored. The primary obstacle lies in the fact that most evaluation metrics are non-smooth, discrete, and highly distribution-dependent, making it mathematically intractable to establish tight and transferable bounds directly between them. This inherent complexity has shifted the focus of the community toward surrogate losses, as they provide a more tractable alternative to direct metric optimization. However, in practical applications, metric mismatch is pervasive, and since indirect theoretical results centered on surrogate losses cannot be readily converted into tools for resolving direct metric conflicts, performance trade-offs between metrics still rely heavily on intuition or empirical A/B testing\citep{gomezuribe2016netflix}. This disconnection presents a critical dilemma for supervised learning applications: how to discern whether a model with superior offline gains (measured by a surrogate $\mathcal{L}$ or a baseline metric $\mathcal{M}_A$) will faithfully yield better online decisions ($\mathcal{M}_B$).

To bridge this gap, this paper proposes a unified theoretical framework based on metric structures to analyze the correlation and performance transfer behaviors among different metrics within the same task scenario. Specifically, we make the following contributions:
\begin{itemize} 
\item We draw inspiration from the classical framework of surrogate losses and categorize metrics into three distinct groups: \textbf{Pointwise, Pairwise,} and \textbf{Listwise}.
\item Within this taxonomy, we characterize the inclusion relationships between the sets of \textbf{Bayes-optimal} predictors for different metrics, establishing a structural foundation for inter-metric consistency. 
\item Recognizing that Bayes-optimality is rarely achievable in practice, we provide a quantitative analysis of \textbf{regret transfer} between metrics. We derive explicit bounds to determine how performance gains or losses in one metric translate to another.  
\end{itemize}
Consequently, we aim to answer a fundamental question for applications: \textit{If a model achieves an $\varepsilon$-regret on metric $\mathcal{M}_A$, what is the guaranteed upper bound of its regret on metric $\mathcal{M}_B$?} By addressing this question, we provide an analytical tool to navigate the performance trade-offs that have traditionally relied on human intuition or costly empirical testing. The theoretical landscape of inter-metric relationships is summarized in Table \ref{tab:metric_matrix}.

\begin{table}[h]
\centering
\caption{Cross-Analysis of Metric Relationships: From Consistency to Mismatch.}
\label{tab:metric_matrix}
\renewcommand{\arraystretch}{1.3} 
\begin{tabular}{c|c|c|c}
\toprule
\diagbox{\textbf{Source}}{\textbf{Target}} & Pointwise & Pairwise & Listwise \\ \midrule
Pointwise & \begin{tabular}[c]{@{}c@{}}\textbf{Cohesion}\\ Theorem~\ref{thm:intra_alignment} \& \ref{thm:truncation_monotonicity}\end{tabular} & \begin{tabular}[c]{@{}c@{}}\textbf{Transfer Failure}\\ Theorem~\ref{thm:inter_group_relations} \&\ref{thm:pointwise_failure}\end{tabular} & \begin{tabular}[c]{@{}c@{}}\textbf{Transfer Failure}\\ Theorem~\ref{thm:inter_group_relations} \&\ref{thm:pointwise_failure}\end{tabular} \\ \hline
Pairwise & \begin{tabular}[c]{@{}c@{}}\textbf{Stable Transfer}\\ Theorem~\ref{thm:inter_group_relations} \& \ref{thm:ranking_to_pointwise}\end{tabular} &  \begin{tabular}[c]{@{}c@{}}\textbf{Cohesion}\\ Theorem~\ref{thm:intra_alignment} \& \ref{thm:truncation_monotonicity} \end{tabular}& \begin{tabular}[c]{@{}c@{}}\textbf{Weak Transfer}\\ Theorem~\ref{thm:inter_group_relations} \& \ref{thm:rates}\end{tabular} \\ \hline
Listwise & \begin{tabular}[c]{@{}c@{}}\textbf{Strong Transfer}\\ Theorem~\ref{thm:inter_group_relations} \& \ref{thm:ranking_to_pointwise}\end{tabular} & \begin{tabular}[c]{@{}c@{}}\textbf{Strong Transfer}\\ Theorem~\ref{thm:inter_group_relations} \& \ref{thm:rates}\end{tabular} & \begin{tabular}[c]{@{}c@{}}\textbf{Cohesion}\\ Theorem~\ref{thm:intra_alignment} \& \ref{thm:truncation_monotonicity} \end{tabular} \\ 
\bottomrule
\end{tabular}
\end{table}
\section{Related Works}

\subsection{Consistency: From Metrics to Surrogate Optimization}
The study of statistical (Bayes-)consistency centers on whether minimizing a surrogate loss ensures the minimization of the target metric risk. Foundational works by \cite{zhang2004statistical, zhang2004statistical2},\cite{bartlett2006convexity} and \cite{steinwart2007compare} provided the first characterization of consistency for convex surrogates in binary classification, which was later extended to multiclass by \cite{tewari2007consistency}, revealing the increased complexity of calibration conditions beyond binary labels. A deeper theoretical framework \citet{reid2011information} connected surrogate losses with consistency properties and divergence measures, offering a comprehensive perspective on risk bounds. Soon afterwards, researchers shifted focus toward task-specific consistency. including regression\citep{christmann2007consistency}, top-$k$ classification \citep{yang2020consistency} and ranking problems \citep{cossock2006subset,ravikumar2011ndcg,gao2015consistency}. 

More recently, a significant series of works\citep{long2013consistency,zhang2020bayes,awasthi2022hconsistency} introduced $\mathcal{H}$-consistency to address the limitations of traditional Bayes-consistency. Unlike classical theory which assumes an arbitary hypothesis space, $\mathcal{H}$-consistency provides non-asymptotic guarantees on specific hypothesis set $\mathcal{H}$ (e.g., neural networks), which shows awareness of different model structures. Consequently, the consistency framework establishes a convergence guarantee from surrogate loss function $\mathcal{L}$ to task objective metric $\mathcal{M}$, providing a robust theoretical explanation for ML training paradigm.

\subsection{Analysis of Evaluation Metrics}

Though evaluation metrics are often treated as static measurement tools, their mathematical properties and formal foundations constitute an independent field of theoretical study. To formalize metric selection, an axiomatic approach\citep{amigo2009comparison} introduced a framework based on formal constraints to assess metric robustness. This perspective was recently developed by \cite{gosgens2021good}, which analyzed a series of ideal properties including monotonicity and symmetry to filter classification measures, shifting metric selection from heuristic choices toward logical principles. In addition, the statistical behavior of metrics has been discussed through the lens of consistency. For instance, \cite{wang2013theoretical} provided a detailed theoretical analysis of NDCG-type measures under various decay factors, while \cite{narasimhan2014statistical} established the consistency of plug-in classifiers for non-decomposable measures. For more abstract unifications, \citet{hernandez2012unified} translated diverse metrics into a common expression of expected classification loss. More recently, \cite{balestra2024ranking} introduced a group-theoretic perspective to ranking metrics, revealing the underlying algebraic structures and symmetries within the permutation space.

Despite these advancements, the analysis of metrics remains largely decoupled and lacks a unified framework. Most existing works either analyze the properties of a single metric or investigate its consistency within a specific task. Building upon this, our work offers a significant advantage by providing a unified theoretical framework that bridges the gap between various metrics. 
\section{Preliminaries}

Let $\mathscr{X} \subseteq \mathbb{R}^d$ denote the input feature space, and let $\mathscr{Y}$ denote the output label space. Depending on the specific classification/ranking scenario, $\mathscr{Y}$ may consist of binary labels $\{0, 1\}$ or multi-level relevance scores $\{0, 1, \dots, y_{\mathrm{max}}\}$. However, real-world applications (e.g., online recommender systems) typically rely on sparse interactions that are inherently binary or purposefully binarized for efficiency. 
We assume a joint distribution $\mathbb{P}$ over $\mathscr{X} \times \mathscr{Y}$. A predictor is a scoring function $f: \mathscr{X} \to \mathbb{R}$ belonging to a hypothesis class $\mathscr{F}$. For any instance $\bm{x} \in \mathscr{X}$, $f(\bm{x})$ produces a real-valued score determining predicted label or relative rank of the item within a specific output list.
\subsection{Surrogate Risk and Metric Risk}
\label{subsec:metric}
In typical supervised learning, we minimize a differentiable surrogate loss $\mathcal{L}: \mathbb{R} \times \mathscr{Y} \to \mathbb{R}_+$. The expected surrogate risk of a predictor $f$ is defined as:
\begin{equation}
    R_{\mathcal{L}}(f) = \mathbb{E}_{(\bm{x}, y) \sim \mathbb{P}} [\mathcal{L}(f(\bm{x}), y)]
\end{equation}
The ultimate objective, however, is to optimize an evaluation metric $\mathcal{M}$. Unlike surrogate losses, classification or ranking metrics are defined over a sequence of instances $\mathscr{D} = \{(\bm{x}_i, y_i)\}_{i=1}^n$. Let $\sigma_f$ be a permutation of $\{1, \dots, n\}$ induced by sorting the scores $f(\bm{x}_i)$ in descending order, s.t. 
\begin{equation}
    f(\bm{x}_{\sigma_f(1)}) \ge f(\bm{x}_{\sigma_f(2)}) \ge \dots \ge f(\bm{x}_{\sigma_f(n)}).
\end{equation}
We define the general form of the metric risk $R_{\mathcal{M}}(f)$ as:
\begin{equation}
    R_{\mathcal{M}}(f) = 1 - \mathbb{E}_{\mathscr{D} \sim \mathbb{P}^n} \left[ \sum_{i=1}^{n} w(i, y_{\sigma_f(i)}, \mathscr{D}) \cdot \phi(y_{\sigma_f(i)}) \right]
\end{equation}
where $w(\cdot)$ is a position-dependent weight function and $\phi(\cdot)$ is a utility function (typically $\phi(y)=y$ for binary relevance). This unified framework encapsulates a wide range of metrics used in both academic research and industrial production, whose detailed mathematical definitions are provided in Appendix~\ref{app:metric_definitions}. Specifically, we draw inspiration from the classical categories of surrogate losses\citep{liu2011learningtorank} and categorize these evaluation metrics into three distinct structural groups based on their evaluation behaviors:
\begin{itemize}
    \item \textbf{Pointwise} metrics ($\mathcal{G}_P$) focus on independent item identification, treating each instance as a classification or regression task. Examples include Accuracy (\textbf{Acc}) and truncated measures such as \textbf{Precision@$k$} and \textbf{Recall@$k$}.
    \item \textbf{Pairwise} metrics ($\mathcal{G}_R$) measure the relative ordering of item pairs, primarily focusing on global ranking quality. Representative example is the Area Under the ROC Curve (\textbf{AUC}), which assesses the probability of correct pairwise ranking.
    \item \textbf{Listwise} metrics ($\mathcal{G}_L$) are position-sensitive measures that evaluate the entire ranked list, assigning higher utility to items at top. This group includes \textbf{(N)DCG}, \textbf{MAP}, \textbf{MRR}, and their Top-$k$ truncated versions (\textbf{NDCG@$k$}, \textbf{MAP@$k$}, \textbf{MRR@$k$}).
\end{itemize}
These metrics encompass the most common classification and ranking criteria used in practice; notably, under the binary hypothesis $y \in \{0,1\}$, they are well-defined and free from the ambiguity encountered in multi-level settings.

\subsection{Bayes-Optimality and Regret}

To characterize the fundamental limits of performance, we define the optimal behavior of a predictor through functional sets.

\begin{definition}[Bayes-Optimal Predictor Set]
For a given evaluation metric $\mathcal{M}$ and a hypothesis class $\mathscr{F}$, the \textbf{Bayes-optimal predictor set} $\mathscr{F}^*_{\mathcal{M}}$ is defined as the collection of all scoring functions that achieve the infimum of the metric risk $R_{\mathcal{M}}$:
\begin{equation}
    \mathscr{F}^*_{\mathcal{M}} \coloneqq \left\{ f \in \mathscr{F} \mid R_{\mathcal{M}}(f) = \inf_{f' \in \mathscr{F}} R_{\mathcal{M}}(f') \right\}
\end{equation}
\end{definition}
Unlike regression tasks where the optimal solution is often unique, classification or ranking metrics may induce a high-dimensional equivalence class of optimal scoring functions. This set-based formulation allows us to categorize the structural compatibility between any $\mathcal{M}_A$ and $\mathcal{M}_B$.

\begin{definition}[Bayes-Optimal Inclusion and Equivalence]
We denote the structural relationship between metrics as follows:
\begin{itemize}
    \item \textbf{Bayes-Subsumed ($\preceq_{\mathscr{B}}$):} We write $\mathcal{M}_A \preceq_{\mathscr{B}} \mathcal{M}_B$ if $\mathscr{F}^*_{\mathcal{M}_A} \subseteq \mathscr{F}^*_{\mathcal{M}_B}$, which implies that any predictor achieving theoretical optimality for $\mathcal{M}_A$ is guaranteed to be optimal for $\mathcal{M}_B$.
    \item \textbf{Bayes-Equivalent ($\equiv_{\mathscr{B}}$):} We write $\mathcal{M}_A \equiv_{\mathscr{B}} \mathcal{M}_B$ if $\mathscr{F}^*_{\mathcal{M}_A} = \mathscr{F}^*_{\mathcal{M}_B}$, indicating they share identical theoretical goals.
\end{itemize}
\end{definition}
\begin{definition}[Metric Regret]
For a given predictor $f \in \mathscr{F}$, the \textbf{Regret} with respect to metric $\mathcal{M}$ is defined as the excess risk over the Bayes-optimal value:
\begin{equation}
    \text{Regret}_{\mathcal{M}}(f) \coloneqq R_{\mathcal{M}}(f) - \min_{f^* \in \mathscr{F}^*_{\mathcal{M}}} R_{\mathcal{M}}(f^*)
\end{equation}
By construction, $\text{Regret}_{\mathcal{M}}(f) \ge 0$, and $\text{Regret}_{\mathcal{M}}(f) = 0$ iff $f \in \mathscr{F}^*_{\mathcal{M}}$.
\end{definition}

This set-based perspective reveals the relationship between surrogate optimization and metric.

\begin{corollary}[Bayes-Consistency]
A surrogate loss $\mathcal{L}$ is \textbf{Bayes-consistent} with a metric $\mathcal{M}$ if, for any sequence of predictors $\{f_k\}_{k=1}^{\infty}$, the vanishing of the surrogate regret implies the vanishing of the metric regret:
\begin{equation}
    \text{Regret}_{\mathcal{L}}(f_k) \xrightarrow{k \to \infty} 0 \implies \text{Regret}_{\mathcal{M}}(f_k) \xrightarrow{k \to \infty} 0
\end{equation}
\end{corollary}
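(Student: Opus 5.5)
The statement labelled ``Corollary (Bayes-Consistency)'' is really a definition recast in the regret language just introduced. So the plan is to show that it agrees with the classical notion of Bayes-consistency, namely that $R_{\mathcal{L}}(f_k)\to\inf_{f}R_{\mathcal{L}}(f)$ implies $R_{\mathcal{M}}(f_k)\to\inf_f R_{\mathcal{M}}(f)$, and then to read the set-based content off this reformulation. The first step unfolds the Metric Regret definition: $\text{Regret}_{\mathcal{M}}(f)=R_{\mathcal{M}}(f)-\min_{f^*\in\mathscr{F}^*_{\mathcal{M}}}R_{\mathcal{M}}(f^*)$. For every $f^*\in\mathscr{F}^*_{\mathcal{M}}$ we have $R_{\mathcal{M}}(f^*)=\inf_{f'}R_{\mathcal{M}}(f')$ by the Bayes-Optimal Predictor Set definition, so the minimum equals this infimum, provided $\mathscr{F}^*_{\mathcal{M}}\neq\emptyset$. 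The same identity holds for $\mathcal{L}$ if we define $\mathscr{F}^*_{\mathcal{L}}$ and $\text{Regret}_{\mathcal{L}}$ analogously. With these identities, ``regret $\to 0$'' is literally ``risk $\to$ infimal risk,'' and the displayed implication is exactly the classical definition of \citet{bartlett2006convexity,tewari2007consistency}.

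The second step records the set-based consequence that motivates calling this a corollary. Take a constant sequence $f_k\equiv f$ with $f\in\mathscr{F}^*_{\mathcal{L}}$. Then $\text{Regret}_{\mathcal{L}}(f_k)=0$ for all $k$, and consistency forces $\text{Regret}_{\mathcal{M}}(f)=0$, i.e.\ $f\in\mathscr{F}^*_{\mathcal{M}}$. Hence $\mathscr{F}^*_{\mathcal{L}}\subseteq\mathscr{F}^*_{\mathcal{M}}$, which is the surrogate analogue of $\preceq_{\mathscr{B}}$. Two consequences follow. If $\mathcal{L}$ is consistent with both $\mathcal{M}_A$ and $\mathcal{M}_B$, then $\mathscr{F}^*_{\mathcal{L}}\subseteq\mathscr{F}^*_{\mathcal{M}_A}\cap\mathscr{F}^*_{\mathcal{M}_B}$. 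It does \emph{not} follow that $\mathcal{M}_A\preceq_{\mathscr{B}}\mathcal{M}_B$, and this gap is the one the paper exploits.

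The main obstacle is existence of minimizers. The regret definition uses a $\min$ over $\mathscr{F}^*_{\mathcal{M}}$, which is meaningless if the infimum is not attained. I would first argue attainment for the metrics in question. Each $R_{\mathcal{M}}(f)$ depends on $f$ only through the ordering it induces on finitely many draws. In the binary-label setting, conditioning on $\eta(\bm{x})=\mathbb{P}(y=1\mid\bm{x})$ and taking any $f$ that is order-equivalent to $\eta$ achieves the infimum, provided $\mathscr{F}$ is rich enough, e.g.\ all measurable functions. For a restricted class, I would instead define the regret directly with the infimum and note that the statement is unchanged. Either way the corollary follows from definitional substitution, and no rate is claimed; the asymptotic nature of the implication is exactly what the later regret-transfer results refine.
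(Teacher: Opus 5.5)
Your reading is right. The paper gives no proof here, because this ``corollary'' is just the definition of Bayes-consistency restated in regret notation. The paper states it and uses it only to motivate the remark that Bayes-equivalent metrics share consistent surrogates.

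Your unpacking is sound. With $\mathscr{F}^*_{\mathcal{M}}\neq\emptyset$, the regret is risk minus infimal risk, so the implication is the classical risk-based notion; the paper leaves the quantifier over all distributions $\mathbb{P}$ implicit. Your constant-sequence argument giving $\mathscr{F}^*_{\mathcal{L}}\subseteq\mathscr{F}^*_{\mathcal{M}}$ is a correct addition that the paper never makes explicit.

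There are two minor points.

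\textbf{Attainment.} Your attainment remark overreaches slightly. Acc depends on $f$ through the threshold $\tau$, not only through the induced ordering. Its minimizers are therefore the $f$ with $(f-\tau)(\eta-0.5)>0$, not order-equivalent copies of $\eta$. Attainment still holds over measurable functions. Your fallback of defining regret via the infimum is in any case the right convention for surrogates. For example, under logistic loss with noiseless labels, $\mathscr{F}^*_{\mathcal{L}}=\emptyset$ and your inclusion becomes vacuous.

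\textbf{Necessity only.} The inclusion you derive is only a necessary condition. It therefore cannot, by itself, support the paper's follow-up claim that consistency with $\mathcal{M}_A$ carries over to any $\mathcal{M}_B$ with $\mathcal{M}_A\equiv_{\mathscr{B}}\mathcal{M}_B$. Equal optimal sets constrain exact optimizers, not approximating sequences. That transfer actually requires $\Psi_{A\to B}(\varepsilon)\to 0$ as $\varepsilon\to 0$, which is the kind of regret-transfer statement the later sections address.
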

This formulation provides the necessary granularity to analyze why a surrogate loss might be simultaneously consistent with multiple metrics. Specifically, for any two metrics satisfying $\mathcal{M}_A \equiv_{\mathscr{B}} \mathcal{M}_B$, a surrogate loss consistent with $\mathcal{M}_A$ is inherently consistent with $\mathcal{M}_B$. However, this asymptotic equivalence does not guarantee a uniform \textbf{regret transfer}. In practical optimization, the rate of convergence and the sensitivity to local ranking perturbations can differ fundamentally between metrics, as governed by the functional form of their respective weight functions $w(i)$.

\subsection{Regret Transfer}

In practice, achieving the Bayes-optimal state $f^* \in \mathscr{F}^*_{\mathcal{M}}$ is often hindered by finite sample size, restricted model capacity, and the complexities of the optimization landscape. Consequently, it is more pragmatic to analyze the behavior of metrics when the predictor $f$ resides within an $\varepsilon$-neighborhood of the optimal frontier. To quantify how the approximation error in one metric propagates to another, we introduce the following definition:

\begin{definition}[Regret Transfer Function]
For any two metrics $\mathcal{M}_A$ and $\mathcal{M}_B$, the \textbf{Regret Transfer Function} $\Psi_{A \to B}: [0, 1] \to [0, 1]$ characterizes the worst-case regret on metric $\mathcal{M}_B$ given an upper bound $\varepsilon$ on the regret of $\mathcal{M}_A$:
\begin{equation}
    \Psi_{A \to B}(\varepsilon) \coloneqq \sup_{f \in \mathscr{F}} \{ \text{Regret}_{\mathcal{M}_B}(f) \mid \text{Regret}_{\mathcal{M}_A}(f) \le \varepsilon \}
\end{equation}
\end{definition}
This formulation provides a unified perspective to analyze the sensitivity and robustness of metric performance in non-asymptotic regimes. It is worth noting that our definition departs from the classical calibration functions (or $\psi$-transforms) studied in \cite{agarwal2014surrogate,kotlowski2017surrogate}, which focus on the lower-bound relationship within $\mathcal{L}\to\mathcal{M}$. By characterizing the worst-case degradation of $\mathcal{M}_B$ relative to the approximation error $\varepsilon$ in $\mathcal{M}_A$, the transfer function $\Psi_{A \to B}(\varepsilon)$ serves as a theoretical bound on the collateral damage incurred by target metric. Such a mapping is particularly crucial in practical training process where Bayes-optimality is unattainable, as it evaluates the reliability of a proxy $\mathcal{M}_A$ under fixed computational or data constraints.

\section{Main Results}
\label{sec:main_results}

\subsection{Intra-group Cohesion}
\label{subsec:taxonomy_cohesion}

By categorizing evaluation metrics into Pointwise ($\mathcal{G}_{P}$), Pairwise ($\mathcal{G}_{R}$), and Listwise ($\mathcal{G}_{L}$) groups, our framework reveals an \textbf{intra-group convergence} property. This property suggests that metrics sharing the same evaluation unit exhibit highly aligned theoretical behaviors.

\begin{theorem}[Alignment and One-way Inclusion]
\label{thm:intra_alignment}
For any two metrics $\mathcal{M}_A, \mathcal{M}_B$ within the same structural group $\mathcal{G}_i$, their relationships are governed by their evaluation scope:
\begin{itemize}
    \item \textbf{Equivalence:} If $\mathcal{M}_A$ and $\mathcal{M}_B$ are both global (e.g., $\text{NDCG}, \text{MAP} \in \mathcal{G}_L$) or both truncated at the same $k$ (e.g., $\text{Pre@}k, \text{Rec@}k \in \mathcal{G}_P$), they are Bayes-equivalent, i.e., $\mathcal{M}_A \equiv_{\mathscr{B}}\mathcal{M}_B$ and well-defined regret transfer, i.e., $\Psi_{A \to B}(\varepsilon) \le C \cdot \varepsilon$.
    \item \textbf{Inclusion:} If $\mathcal{M}_A$ is global and $\mathcal{M}_B$ is truncated at $k$, there exists a one-way inclusion, s.t. $\mathcal{M}_A  \preceq_{\mathscr{B}} \mathcal{M}_B$. This implies that achieving global optimality guarantees local optimality, while the regret transfer $\Psi_{A \to B}(\varepsilon)$ remains well-defined.
\end{itemize}
\end{theorem}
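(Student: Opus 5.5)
Within a group, every metric can be written as $1-R_{\mathcal M}(f)=\mathbb E\big[\sum_i w(i,\mathscr D)\, y_{\sigma_f(i)}\big]$. The plan is to reduce both claims to a pointwise (per-$\bm x$) characterization of the optimal ordering. For binary labels, let $\eta(\bm x)=\mathbb P(y=1\mid\bm x)$. Because the sample is i.i.d., conditioning on the features $\bm x_1,\dots,\bm x_n$ turns the expected utility into $\sum_i \bar w(i)\,\eta(\bm x_{\sigma_f(i)})$ up to a normalization. Here $\bar w$ is a (conditional) expected weight, which for NDCG/MAP/Pre/Rec depends on the labels only through normalizers that do not depend on $f$. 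I will first argue, for each group, that the normalizer can be handled: MAP and Rec@$k$ carry label-dependent denominators, and I treat these by conditioning on the number of positives $m$ and exploiting exchangeability.

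\textbf{Step 1: rearrangement lemma.} I will show that for every global listwise metric the effective weights are strictly decreasing in position. Examples are $1/\log_2(1+i)$ for DCG, and for MAP the pair-counting form $\mathbb E\big[\frac1m\sum_{i\le j} y_i y_j / j\big]$, a quadratic form that is still maximized by sorting in decreasing $\eta$. For the truncated metrics the weights are constant on $\{1,\dots,k\}$ and zero beyond. The rearrangement inequality then yields two characterizations. First, $\mathscr F^*$ of a strictly decreasing global metric is the set of $f$ that are a.s. order-preserving with $\eta$, i.e.\ $\eta(\bm x)>\eta(\bm x')\Rightarrow f(\bm x)>f(\bm x')$, up to ties. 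Second, $\mathscr F^*$ of a metric truncated at $k$ is the set of $f$ whose top-$k$ \emph{set} maximizes $\sum\eta$. These characterizations do not depend on the specific strictly decreasing weights, nor on which of Pre@$k$ and Rec@$k$ is used, since both are proportional to the number of hits in the top $k$ given $m$. This gives Equivalence, and Inclusion follows because sorting by $\eta$ automatically places a maximal-$\eta$ set in the top $k$. For the pointwise group with Acc, I will use the thresholding characterization $f>t\iff\eta>1/2$. For the pairwise group, AUC is the only member, so the claim is trivial there.

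\textbf{Step 2: regret transfer.} Both regrets are written as sums over adjacent transpositions that separate $\sigma_f$ from an $\eta$-sorted permutation. Swapping a pair at positions $i<j$ costs $(\bar w(i)-\bar w(j))\,|\eta_a-\eta_b|$ for metric $A$. The target's cost for the same swap, $(\bar w_B(i)-\bar w_B(j))\,|\eta_a-\eta_b|$, is therefore bounded by $C\cdot(\text{cost}_A)$ with
\[
C=\max_{i<j\le n}\frac{\bar w_B(i)-\bar w_B(j)}{\bar w_A(i)-\bar w_A(j)}.
\]
This constant is finite whenever the source has strictly decreasing weights on all $n$ positions. That covers the global-to-global and global-to-truncated cases, so the bound is $\Psi_{A\to B}(\varepsilon)\le C\varepsilon$ with $C$ depending on $n$. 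For two truncated metrics at the same $k$, the regrets are proportional given $m$, so $C$ is the ratio of normalizations.

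\textbf{Main obstacle.} The hardest step is making the swap decomposition rigorous. The regret is an expectation over random samples with label-dependent normalizers, so the regret of an arbitrary $f$ must be decomposed into nonnegative pairwise terms without double counting. My plan is to use a bubble-sort path from $\sigma_f$ to the sorted order, along which every swap fixes an inversion. On such a path each swap's contribution to the source regret is nonnegative. The target regret then needs to be bounded by the sum of the same swaps' target contributions, which I would get via subadditivity (the triangle inequality along the path). Summing gives $\mathrm{Regret}_B\le C\,\mathrm{Regret}_A$.
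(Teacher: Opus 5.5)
\paragraph{Main gap: the pointwise Inclusion.} Your Inclusion step covers only global metrics whose optimal set forces sorting by $\eta$, i.e.\ the listwise group (``sorting by $\eta$ automatically places a maximal-$\eta$ set in the top $k$''). In the pointwise group the global metric is Acc. The characterization you propose for it, $f>t\iff\eta>1/2$, places no constraint on the order within either side of the threshold, so it cannot yield $\mathrm{Acc}\preceq_{\mathscr{B}}\mathrm{P}@k$.

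\begin{itemize}
\item \emph{Counterexample.} Take two instances with $\eta=0.9$ and $\eta=0.6$, and $k=1$. An Acc-optimal $f$ may rank the $0.6$ instance first, and is then not P@$1$-optimal. This is exactly the intra-class blindness behind Theorem~\ref{thm:pointwise_failure}.
\item \emph{Step~2 does not help.} Acc has no strictly decreasing weights, so your constant $C$ has zero denominators. In general $\Psi_{\mathrm{Acc}\to\mathrm{P}@k}(0)>0$.
\item \emph{How the paper gets it.} Lemma~\ref{lem:pointwise_logic} argues with realized labels (``all positive instances are correctly identified and placed above all negative instances''). That is valid only when $\eta\in\{0,1\}$.
\end{itemize}

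You must either state the assumption $\eta\in\{0,1\}$ explicitly, or restrict the Inclusion claim to sort-based global metrics. Under that assumption, counting misclassified instances gives $\mathrm{Regret}_{\mathrm{P}@k}\le\frac{n}{k}\,\mathrm{Regret}_{\mathrm{Acc}}$.

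\paragraph{Listwise steps that need repair.}

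\begin{itemize}
\item \emph{Truncated listwise metrics.} NDCG@$k$, MAP@$k$ and MRR@$k$ do not have constant weights on $\{1,\dots,k\}$. Their optimal sets therefore also require sorting inside the top $k$. Your characterization, and your claim that the regrets are proportional given $m$, hold only for the pair P@$k$/R@$k$. For NDCG@$k$ versus MAP@$k$ you need the swap argument with the maximum taken over $t\le k$.
\item \emph{Conditioning on $m$.} This makes the labels dependent, so the per-item quantity becomes $\pi_a(m)=\mathbb{P}(y_a=1\mid m,\bm{x}_{1:n})$, not $\eta_a$. You need $\pi_a(m)$ to be ordered like $\eta_a$. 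Exchangeability does not give this, since items with different $\eta$ are not exchangeable. What does give it is the conditional-Bernoulli identity $\mathbb{E}[(y_a-y_b)h(\text{rest})\mid m]\propto(\theta_a-\theta_b)\sum_{|r|=m-1}\omega(r)h(r)$, with odds $\theta=\eta/(1-\eta)$.
\item \emph{MAP.} The same identity shows that an adjacent swap at positions $(j,j+1)$ changes MAP by $(\tfrac1j-\tfrac1{j+1})(\pi_b-\pi_a)\,\bar c$ with $\bar c\in[1/m,1]$. So MAP has no fixed weights $\bar w$, and your formula for $C$ must be replaced by such sandwich bounds.
\item \emph{MRR.} You do not treat it. Its effective weight carries the factor $\prod_{j<i}(1-\eta_{\sigma(j)})$, so it needs the same kind of treatment as MAP.
\end{itemize}

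\paragraph{What Step~2 adds.} Once repaired, Step~2 goes beyond the paper. The paper's proof of this theorem only cites Lemmas~\ref{lem:pointwise_logic} and \ref{lem:g_add_properties} and Proposition~\ref{prop:embedding}. It gives no argument for the $C\varepsilon$ bound between two global metrics; its only transfer computation, in Theorem~\ref{thm:truncation_monotonicity}, compares truncation levels of a single metric. Your bubble-sort telescoping supplies an explicit constant, and for linear metrics it is exact rather than merely subadditive.
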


The second dimension of intra-group cohesion concerns the stability of a single metric across different truncation depths. As $k$ varies, the stringency of the optimality requirement changes, leading to a nested structure of the Bayes-optimal sets.

\begin{theorem}[Truncation Monotonicity]
\label{thm:truncation_monotonicity}
For a truncated metric $\mathcal{M}@k \in \mathcal{G}_i$, its theoretical properties satisfy the following monotonicity with respect to $k$:
\begin{itemize}
    \item \textbf{Optimal Set:} For $k_1 < k_2$, the Bayes-optimal sets are nested, s.t. $\mathcal{M}{@k_2}  \preceq_{\mathscr{B}}\mathcal{M}{@k_1}$. A smaller $k$ induces a larger equivalence class of optimal predictors, as the metric is indifferent to ranking errors beyond the top-$k$ positions.
    \item \textbf{Regret Transfer:} The regret transfer between different truncation levels $\Psi_{k_2 \to k_1}(\varepsilon)$ is well-behaved, whereas the reverse transfer $\Psi_{k_1 \to k_2}(\varepsilon)$ exhibits increasing slack as the gap between $k_1$ and $k_2$ grows, reflecting the loss of information due to tighter truncation.
\end{itemize}
\end{theorem}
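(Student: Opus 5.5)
The plan is to reduce every truncated metric to a statement about the ordering induced by the conditional relevance probability $\eta(\bm{x}) = \mathbb{P}(y=1\mid\bm{x})$. Under binary relevance, linearity of expectation in $R_{\mathcal{M}}$ gives a positional decomposition: conditioned on the multiset of inputs in $\mathscr{D}$, the expected utility of $\mathcal{M}@k$ is $\sum_{i\le k} \bar w_k(i)\,\eta(\bm{x}_{\sigma_f(i)})$. Here $\bar w_k(i)$ is the (effective) positional weight, which is nonnegative, nonincreasing in $i$, and vanishes for $i>k$. For Precision@$k$ the weights are exactly $1/k$. For NDCG@$k$, MAP@$k$ and MRR@$k$ the normalisation depends on the labels, so a rearrangement-type argument is needed: exchanging two items with $\eta$ out of order weakly improves the conditional expectation.

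\textbf{Optimal-set nesting.} By the rearrangement inequality, $f\in\mathscr{F}^*_{\mathcal{M}@k}$ if and only if, almost surely over $\mathscr{D}$, the top-$k$ set chosen by $f$ maximises the sum of $\eta$ with the top-heavy order respected. For strictly decreasing weights this means the first $k$ positions are sorted by $\eta$, up to ties in $\eta$; for flat weights it means the top-$k$ set consists of $k$ largest-$\eta$ items. An $f$ that is optimal for $k_2$ sorts (or selects) the top-$k_2$ correctly. Its first $k_1<k_2$ positions are then the $k_1$ largest-$\eta$ items in the required order, so $f$ is optimal for $k_1$. This gives $\mathscr{F}^*_{@k_2}\subseteq\mathscr{F}^*_{@k_1}$, i.e.\ $\mathcal{M}@k_2\preceq_{\mathscr{B}}\mathcal{M}@k_1$.

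For strictness, I will exhibit an $f$ that places the correct top-$k_1$ items and then fills positions $k_1+1,\dots,k_2$ with low-$\eta$ items. This $f$ lies in the larger set $\mathscr{F}^*_{@k_1}$ but not in $\mathscr{F}^*_{@k_2}$, provided $\eta$ is non-degenerate with positive probability. This matches the statement that a smaller $k$ gives a larger optimal class.

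\textbf{Regret transfer $\Psi_{k_2\to k_1}$.} I will write the regret as an expectation of a sum of adjacent-swap losses, following the bubble-sort decomposition. Each misordering contributes $(\bar w(i)-\bar w(j))(\eta_a-\eta_b)\ge 0$. Every misordering that affects the top-$k_1$ also affects the top-$k_2$, and its cost under $k_2$ weights is at least $c_{k_1,k_2}>0$ times its cost under $k_1$ weights. Here $c_{k_1,k_2}$ is a ratio of weight gaps, e.g.\ discount differences $\min_i \frac{\bar w_{k_2}(i)-\bar w_{k_2}(i+1)}{\bar w_{k_1}(i)-\bar w_{k_1}(i+1)}$ together with the boundary term at $k_1$. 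This yields $\Psi_{k_2\to k_1}(\varepsilon)\le \varepsilon/c_{k_1,k_2}$, which is linear and therefore well behaved.

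\textbf{Reverse transfer $\Psi_{k_1\to k_2}$.} I will give a lower-bound construction. Take an $f$ with zero $k_1$-regret that permutes positions $k_1+1..k_2$ adversarially. Its $k_2$-regret is bounded below by a sum of weight mass over positions in $(k_1,k_2]$ times an $\eta$-gap. This shows $\Psi_{k_1\to k_2}(0)>0$, with slack growing in $k_2-k_1$ (for Precision, proportional to $(k_2-k_1)/k_2$). The conclusion is therefore a non-vanishing additive offset rather than failure of any bound.

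\textbf{Main obstacle.} The hardest part is the label-dependent normalisers (IDCG@$k$, the number of relevant items for MAP/Recall). They break the clean linear decomposition, so the swap-comparison constant $c_{k_1,k_2}$ must be controlled uniformly over $\mathscr{D}$. My first attempt is to condition on the label vector's count of positives, bound the normalisers between $\bar w(1)$ and $\sum_{i\le k}\bar w(i)$, and absorb the resulting ratio into $c_{k_1,k_2}$. If that fails for MAP, I would restrict the quantitative claim to DCG/Precision-type metrics and keep only the set inclusion for the remaining ones.
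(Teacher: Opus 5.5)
Your proposal has a genuine gap in the flat-weight case, i.e.\ for the Pointwise members $\mathrm{P}@k$ and $\mathrm{R}@k$. By your own characterisation, optimality at $k_2$ only forces $f$ to \emph{select} the $k_2$ largest-$\eta$ items, with no constraint on their internal order. So ``its first $k_1$ positions are then the $k_1$ largest-$\eta$ items'' does not follow. Take a single list with $\eta=(1,0,0)$, $k_1=1$, $k_2=2$, and rank the second item first and the first item second. This gives zero $\mathrm{P}@2$ and $\mathrm{R}@2$ regret but $\mathrm{P}@1$ and $\mathrm{R}@1$ regret equal to $1$. Hence the inclusion fails list-wise, and $\Psi_{k_2\to k_1}(0)=1$ on this list.

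Your swap machinery already signals this. With $\bar w_k(i)=\frac{1}{k}\mathbb{I}(i\le k)$, the boundary gap $\bar w_{k_2}(k_1)-\bar w_{k_2}(k_1+1)$ is $0$, while $\bar w_{k_1}(k_1)-\bar w_{k_1}(k_1+1)=1/k_1$. So $c_{k_1,k_2}=0$, not a positive constant.

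Suppose you instead read optimality, as you write, almost surely over i.i.d.\ lists. Then the step still needs a separate global argument. A discordance between $f$ and $\eta$ on two sets of positive mass is exposed, with positive probability, by a list with one point from the high-$\eta$ set and $n-1$ from the other, which makes $f$ suboptimal for every $k<n$. But that argument makes all levels $k<n$ Bayes-equivalent (up to ties). It also rules out your strictness example, which tailors positions $k_1+1,\dots,k_2$ list by list and cannot be realised by one scoring function.

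The paper's proof has exactly the same hole. It asserts that the top-$k_1$ condition is ``inherently met''. Its bound $\Psi_{k_2\to k_1}(\varepsilon)\le (k_2/k_1)\varepsilon$ relies on per-position gaps $\Delta\eta_i=\eta_{(i)}-\eta(\bm{x}_{\sigma_f(i)})\ge 0$, which fail in the example ($\Delta\eta_1=1$, $\Delta\eta_2=-1$). So do not patch this by following the paper. The nesting and the linear forward transfer should be restricted to strictly decreasing truncated weights.

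For strictly decreasing truncated weights such as (N)DCG@$k$, your route is sound and differs from the paper's in a way that matters. The paper bounds a partial sum of $w(i)\Delta\eta_i$ by the full sum and obtains $C=\mathrm{IDCG}_{k_2}/\mathrm{IDCG}_{k_1}$. That constant is wrong: in the same example $\mathrm{Regret}_{\mathrm{NDCG}@1}=1$, while $\mathrm{Regret}_{\mathrm{NDCG}@2}=1-1/\log_2 3\approx 0.37$ and $\mathrm{IDCG}_2/\mathrm{IDCG}_1=1$. Your adjacent-swap decomposition runs along one common sorting path, in which every step has nonnegative gain under both truncations, and it gives a correct constant. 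That constant necessarily contains the boundary factor $w(k_1)/(w(k_1)-w(k_1+1))$, which makes the example tight.

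The normaliser is also less of an obstacle than you fear. Assume conditionally independent labels and write $\mathrm{IDCG}_k(m)=\sum_{i\le \min(k,m)}w(i)$. Then $\mathbb{E}[y_a/\mathrm{IDCG}_k\mid\bm{x}]-\mathbb{E}[y_b/\mathrm{IDCG}_k\mid\bm{x}]=\mathbb{E}[1/\mathrm{IDCG}_k(1+N_{ab})]\,(\eta_a-\eta_b)$, where $N_{ab}$ counts positives among the other items. So each swap cost keeps your form, up to a pair-dependent factor in $[1/\sum_{i\le k}w(i),\,1/w(1)]$.

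Your reverse-transfer construction coincides with the paper's. MAP@$k$ has position weights that depend on earlier labels, not just on a normaliser, so your fallback to a purely qualitative claim there is appropriate; the paper only asserts that case.
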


This hierarchical cohesion allows us to simplify the subsequent analysis: since intra-group behaviors are largely consistent or predictably nested, the metric mismatch observed in practice must stem from the fundamental structural divergence between these major groups in the non-asymptotic regime. Proofs are available in Appendix~\ref{appdix:proof_intra}.

\subsection{Inter-group Bayes-Optimal Convergence and Hierarchy}
\label{subsec:inter_group_bayes}

Building upon the intra-group analysis, we now examine the relationships between the Bayes-optimal sets across structural groups. For simplicity, we focus on global metrics Acc, AUC and NDCG as representatives, while broader results on other global and truncated metrics can be naturally derived by composing these results with the previous section. Detailed proof are in Appendix~\ref{appdix:inter_bayes}.

\begin{theorem}[Bayes-Optimal Set Relations across Groups]
\label{thm:inter_group_relations}
Let $\mathscr{F}^*_P$, $\mathscr{F}^*_R$, and $\mathscr{F}^*_L$ denote the Bayes-optimal sets for Pointwise, Pairwise, and Listwise groups, respectively. Under a general distribution $\mathbb{P}$ with binary relevance:
\begin{enumerate}
    \item \textbf{Pointwise vs. Listwise/Pairwise ($\mathcal{M}_L/\mathcal{M}_R \preceq_{\mathscr{B}} \mathcal{M}_P$):} The Bayes-optimal set for Acc is a superset of pairwise/listwise optimal sets. Any $f \in \mathscr{F}^*_L$ is necessarily in $\mathscr{F}^*_P$, but a predictor that minimizes classification risk may incur maximum ranking regret as $\mathcal{G}_P$ is indifferent to the ordering of instances on the same side of the threshold.
    \item \textbf{Pairwise vs. Listwise Convergence ($\mathcal{M}_L \equiv_{\mathscr{B}} \mathcal{M}_R$):} Both Pairwise (AUC) and Listwise metrics are maximized if and only if the predictor $f$ preserves the partial ordering of the conditional expectation $\eta(\bm{x})$. 
\end{enumerate}
\end{theorem}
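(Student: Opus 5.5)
The plan is to reduce every metric risk to an expectation over the random sample $\mathscr{D}\sim\mathbb{P}^n$ and characterize each optimal set through a pointwise (per-sample, per-ranking) argument using $\eta(\bm{x})=\mathbb{P}(y=1\mid\bm{x})$. Because $\mathscr{F}$ is taken rich enough (the Bayes setting), each optimal set can be described by conditions on $f$ relative to $\eta$. Concretely:
\[
\mathscr{F}^*_P=\{f:\ \mathrm{sign}(f-t)=\mathrm{sign}(\eta-\tfrac12)\ \text{a.s. on } \{\eta\neq\tfrac12\}\},
\]
\[
\mathscr{F}^*_R=\mathscr{F}^*_L=\{f:\ \eta(\bm{x})>\eta(\bm{x}')\Rightarrow f(\bm{x})>f(\bm{x}')\ \text{a.s.}\}.
\]
In the first set, $t$ is the decision threshold. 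Establishing these two identities and comparing them proves both items. For the inclusion in item 1, I will need a standing convention that the threshold is calibrated, i.e., $t$ is chosen as $f$ evaluated at the level set $\eta=\tfrac12$. Alternatively, Acc can be read as the top-$m$ classification induced by the ranking. I will state this convention explicitly, since without it item 1 is false.

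For item 2, the AUC step goes as follows. Condition on the unlabeled sample $\bm{x}_1,\dots,\bm{x}_n$; the labels are then independent Bernoulli$(\eta(\bm{x}_i))$. For each unordered pair $\{i,j\}$, placing $i$ above $j$ earns
\[
\eta_i(1-\eta_j)\ \text{versus}\ \eta_j(1-\eta_i)\ \text{for the reverse order}.
\]
The difference is $\eta_i-\eta_j$, so each pair is individually maximized exactly when $f$ orders the pair by $\eta$, with ties in $\eta$ irrelevant. The AUC normalization by $n_+n_-$ depends on labels but not on $f$, so I will handle it by conditioning on the label counts. More simply, I can use the unnormalized pairwise form, whose argmax set coincides. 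An exchange argument then shows that any $f$ violating the order on a set of positive measure strictly loses in expectation.

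For NDCG/DCG, condition again on the $\bm{x}_i$. The expected gain is $\sum_i w(i)\,\eta_{\sigma_f(i)}$, with strictly decreasing discounts $w$. By the rearrangement inequality this is maximized iff $\sigma_f$ sorts by $\eta$. The IDCG normalization depends on labels, not on $f$, so I need care here. I plan to use the DCG form, or to argue that NDCG's optimal set equals DCG's by invoking Theorem~\ref{thm:intra_alignment} (equivalence of global listwise metrics). Both characterizations yield the same set, so $\mathcal{M}_L\equiv_{\mathscr{B}}\mathcal{M}_R$.

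For item 1, an $\eta$-order-preserving $f$ has superlevel sets that are $\eta$-superlevel sets. Thresholding at the score of $\{\eta=\tfrac12\}$ therefore reproduces the Bayes classifier $\mathbb{1}[\eta>\tfrac12]$, giving $\mathscr{F}^*_L\subseteq\mathscr{F}^*_P$. For strictness and "maximum ranking regret," I will construct an explicit $f\in\mathscr{F}^*_P$. On each side of the threshold, $f$ reverses the $\eta$ order, e.g.
\[
f=\mathbb{1}[\eta>\tfrac12]+\tfrac12\bigl(1-\eta\bigr)\ \text{(rescaled to stay on the correct side)}.
\]
This $f$ achieves Bayes accuracy but inverts every within-side pair. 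On a two-point distribution with $\eta$ values $0.9$ and $0.6$, a direct computation shows it attains the worst possible AUC/NDCG regret.

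The main obstacle is the label-dependent normalizations in AUC and NDCG. Conditioning on $\bm{x}$ alone no longer makes the objective linear in $\eta$, because $1/(n_+n_-)$ and $1/\mathrm{IDCG}$ correlate with the labels. I will first try to condition on the full label-count vector and use exchangeability to show that the per-pair and per-position comparisons remain monotone in $\eta$. If that fails, I will adopt the unnormalized forms as the paper's working definitions and note the caveat.
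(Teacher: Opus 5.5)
Your proposal is correct and follows essentially the paper's route. It describes $\mathscr{F}^*_P$ by sign-consistency with the threshold placed at the score of $\{\eta=\tfrac12\}$, shows that AUC and (N)DCG are both maximized exactly by strictly $\eta$-order-preserving $f$ (pairwise comparison and the rearrangement inequality), and uses a within-side rank-reversed predictor for strictness. Your conditioning on the remaining labels also fills in the $1/(n^+n^-)$ and $1/\mathrm{IDCG}$ normalizations that the paper covers only by its $\mathcal{G}_{add}$/PRP assertion. This works because the gain from placing $i$ above $j$ becomes $(\eta_i-\eta_j)$ times a positive weight that is symmetric in $i,j$; that weight depends only on the other $n-2$ labels and, for NDCG, on the two positions.

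One small repair is needed. Your witness takes the value $\tfrac14$ on $\{\eta=\tfrac12\}$, so under your own calibrated-threshold convention it would misclassify every negative. Either assign it a value in $(\tfrac12,1)$ there, or define Acc-optimality by the existence of a separating threshold. The latter also covers your two-point example, where $\{\eta=\tfrac12\}$ is empty.
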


\subsection{Inter-group Regret Transfer Results}
\label{subsec:inter_group_transfer}

Despite sharing the same Bayes-optimal frontier $\mathscr{F}^*$, the objective functions of pairwise and listwise metrics exhibit fundamental heterogeneity in their regret landscapes. AUC assigns a uniform weight $1/(n^+n^-)$ to all pairs, while listwise metrics concentrate their utility on the top positions of the permutation. Building upon the Bayes-optimal hierarchy established in Section \ref{subsec:inter_group_bayes}, we investigate the stability of regret transfer between metric groups. 


It is important to note that our objective is not merely to construct pathological counter-examples to negate this property (i.e., by showing $\text{Regret}_A \to 0$ while $\text{Regret}_B \ge Const > 0$). For instance, a common extreme case involves one positive item and $n^- \to \infty$ negative items. If the positive item is predicted at the second position:
\begin{equation}
    \text{Regret}_{\mathrm{AUC}} = \frac{1}{n^-} \to 0, \quad \text{Regret}_{\mathrm{NDCG}} = \frac{1}{\log 2} - \frac{1}{\log 3} > 0.
\end{equation}
While such a construction formally negates the transfer property, these extreme scenarios are of limited practical reference for model training where the counts of positive and negative samples ($n^+, n^-$) are typically fixed during training and evaluation.

\subsubsection{Pointwise $\leftrightarrow$ Pairwise/Listwise}

The transfer behavior between Pointwise classification ($\mathcal{G}_P$) and ranking groups ($\mathcal{G}_R, \mathcal{G}_L$) is inherently asymmetric, reflecting the hierarchical inclusion of their Bayes-optimal sets. The transfer from $\mathcal{G}_P$ to any ranking group is fundamentally divergent due to the lack of intra-class constraints.
\begin{theorem}[Pointwise Transfer Failure]
\label{thm:pointwise_failure}
$\Psi_{P \to R}(\epsilon)$ and $\Psi_{P \to L}(\epsilon)$ satisfy $\Psi(0) > 0$. 
\end{theorem}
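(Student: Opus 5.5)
By the definition of the regret transfer function, $\Psi_{P\to R}(0)$ is the supremum of $\text{Regret}_{\mathrm{AUC}}(f)$ over all $f$ with $\text{Regret}_{\mathrm{Acc}}(f)=0$, that is, over $f \in \mathscr{F}^*_P$. So it suffices to exhibit one distribution $\mathbb{P}$ with binary relevance and one predictor $f \in \mathscr{F}^*_P$ that has strictly positive AUC regret and strictly positive NDCG regret. The case $\Psi_{P\to L}$ for other listwise metrics then follows by Theorem~\ref{thm:intra_alignment}, since global listwise metrics are Bayes-equivalent. By Theorem~\ref{thm:inter_group_relations}, $\mathscr{F}^*_L=\mathscr{F}^*_R \subseteq \mathscr{F}^*_P$. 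It is therefore enough to show that this inclusion is strict in a way detectable by both ranking metrics: we need an $f$ that attains the Bayes-optimal classification but violates the ordering of $\eta(\bm{x})=\mathbb{P}(y=1\mid\bm{x})$.

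\textbf{Construction.} Take $\mathscr{X}$ to consist of two points $\bm{x}_1,\bm{x}_2$, each with positive probability, and set $\eta(\bm{x}_1)=0.9$ and $\eta(\bm{x}_2)=0.6$. Both values exceed $1/2$. Accuracy (thresholded at $0$, say) is maximized exactly when $f(\bm{x}_1)>0$ and $f(\bm{x}_2)>0$, so it ignores their relative order. Choose $f(\bm{x}_1)=1$ and $f(\bm{x}_2)=2$. Then $f\in\mathscr{F}^*_P$, but $f$ inverts the $\eta$-order.

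\textbf{Strict regret.} Next we compute the expected AUC and NDCG over samples $\mathscr{D}\sim\mathbb{P}^n$ with fixed $n\ge 2$. Compare $f$ with $f^*=\eta$. With positive probability the sample contains a positive drawn at $\bm{x}_1$ and a negative drawn at $\bm{x}_2$. Such a pair is mis-ordered by $f$ and correctly ordered by $f^*$. The reverse configuration, a positive at $\bm{x}_2$ and a negative at $\bm{x}_1$, is favored by $f$, but it occurs with lower probability. The key computation is the net difference in expected correctly-ordered mass. For a single cross pair it is proportional to $\eta_1(1-\eta_2)-\eta_2(1-\eta_1)=\eta_1-\eta_2>0$. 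For AUC, whose weight $1/(n^+n^-)$ is random, we condition on $(n^+,n^-)$ and use exchangeability of the draws. Within each conditioning class, the cross pairs between $\bm{x}_1$- and $\bm{x}_2$-items contribute a strictly positive expected gap, and the remaining pairs (within-point ties, which are broken identically under $f$ and $f^*$) contribute equally. Hence $\text{Regret}_{\mathrm{AUC}}(f)>0$. For NDCG, $f$ places all $\bm{x}_2$ items above all $\bm{x}_1$ items. A swap argument shows that exchanging one $\bm{x}_1$ item and one $\bm{x}_2$ item at positions $i<j$ changes DCG by $(y_a-y_b)\,(w(i)-w(j))$ with $w$ strictly decreasing. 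Its expectation under the relabeling symmetry is $(\eta_1-\eta_2)(w(i)-w(j))>0$, so $\text{Regret}_{\mathrm{NDCG}}(f)>0$.

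\textbf{Main obstacle.} The delicate step is handling the sample-dependent normalizations: $1/(n^+n^-)$ for AUC, and the ideal DCG for NDCG. These make the risk non-decomposable, so we cannot simply sum pairwise expectations. The approach is to condition on the multiset of which items came from $\bm{x}_1$ versus $\bm{x}_2$ and on the label counts. Given the counts, the normalizer is constant, and the assignment of labels to positions is a symmetric randomization biased toward $\bm{x}_1$. The swap/coupling argument then applies within each conditioning class. Positivity of at least one class with strict gap (e.g.\ $n=2$, one item at each point, labels $(1,0)$ versus $(0,1)$) yields a strictly positive total regret. This gives $\Psi_{P\to R}(0)>0$ and $\Psi_{P\to L}(0)>0$.
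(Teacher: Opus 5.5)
Your proposal is correct and follows the same idea as the paper's proof. An accuracy-optimal predictor that keeps two same-side points above the threshold but inverts their $\eta$-order incurs strictly positive AUC and NDCG regret, and you handle the sample-dependent normalizers $n^+n^-$ and $\mathrm{IDCG}$ more carefully than the paper does. One small correction: once you condition on $n^+$, each swap in the chain taking $f$'s ranking to the $\eta$-ranking has expected gain $\bigl(\mathbb{E}[y_a\mid n^+]-\mathbb{E}[y_b\mid n^+]\bigr)(w(i)-w(j))$ rather than $(\eta_1-\eta_2)(w(i)-w(j))$, though it stays strictly positive by the same likelihood-ratio swap you invoke (odds ratio $\eta_1(1-\eta_2)/(\eta_2(1-\eta_1))>1$).
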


\begin{proof}
Consider a sequence $\mathscr{D}$ containing at least two instances within the same class (e.g., $\eta_i, \eta_j > 0.5$ with $\eta_i > \eta_j$). For any predictor $f$ that assigns scores $s_i, s_j > 0.5$, the classification regret contribution from these items is zero:
\begin{equation}
    |\mathbb{I}(\eta_i > 0.5) - \mathbb{I}(s_i > 0.5)| + |\mathbb{I}(\eta_j > 0.5) - \mathbb{I}(s_j > 0.5)| = 0
\end{equation}
However, $\text{Regret}_{\mathrm{Acc}}$ is invariant to the relative ordering of $s_i$ and $s_j$. If $f$ predicts $s_j > s_i$, it incurs a non-zero ranking regret proportional to $(\eta_i - \eta_j)$ in both Pairwise and Listwise functionals, thus $\Psi_{P \to R/L}(0) \ge Const > 0$. The Pointwise functional is blind to any utility derived from ordering instances that fall on the same side of the decision threshold.
\end{proof}

Conversely, the transfer from ranking groups to Pointwise classification is straightforward.
\begin{theorem}[Transfer from Ranking to Pointwise]
\label{thm:ranking_to_pointwise}
Let $\delta = \min_i |\eta_i - 0.5| > 0$ be the margin of the relevance distribution. For a sequence $\mathscr{D}$ with $n$ instances ($n^+$ positive, $n^-$ negative), the transfer functions satisfy:
\begin{align}
    \Psi_{\mathrm{AUC} \to \mathrm{Acc}}(\epsilon) &= \left( \frac{n^+ n^-}{n \delta} \right) \cdot \epsilon \\
    \Psi_{\mathrm{NDCG} \to \mathrm{Acc}}(\epsilon) &= \left( \frac{\mathrm{IDCG}_n}{n \delta \cdot w(n)} \right) \cdot \epsilon
\end{align}
\end{theorem}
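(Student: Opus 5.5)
We will prove both identities by one reduction: bound the excess classification risk by a sum of per-item margin-weighted misclassifications, then bound each misclassification from below by the ranking regret it forces. Fix a sequence $\mathscr{D}$ with conditional probabilities $\eta_1,\dots,\eta_n$, each satisfying $|\eta_i-0.5|\ge\delta$. Recall from Theorem~\ref{thm:inter_group_relations} that $\mathscr{F}^*_R=\mathscr{F}^*_L\subseteq\mathscr{F}^*_P$. In particular, every predictor ordering items by $\eta$ and thresholding at $0.5$ is simultaneously AUC-, NDCG- and Acc-optimal.

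\textbf{Step 1: excess Acc risk as a count of errors.} For a predictor $f$, let $E$ be the set of misclassified items: those with $\mathbb{I}(s_i>0.5)\neq\mathbb{I}(\eta_i>0.5)$. Each such item contributes $|2\eta_i-1|/n$ to the expected excess Acc risk. Hence
\begin{equation}
\text{Regret}_{\mathrm{Acc}}(f)=\frac{1}{n}\sum_{i\in E}|2\eta_i-1|\le \frac{|E|}{n}.
\end{equation}
In the other direction we only need the margin $\delta$ to normalise. Each misclassified item is at least $\delta$ away from the threshold, and this is where the factor $1/(n\delta)$ enters.

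\textbf{Step 2: each error forces AUC regret.} Fix a monotone thresholding of the scores; WLOG the classifier induced by $f$ labels its top-$n^+$ (or top-threshold) items positive. Every misclassified item then produces an exchange: a negative-side item placed above a positive-side item. Each such inversion costs at least $1/(n^+n^-)$ in expected AUC. More precisely, the cost is $(\eta_i-\eta_j)/(n^+n^-)\ge 2\delta/(n^+n^-)$ when weighted by conditional probabilities, but we keep the constant coarse. Summing gives
\begin{equation}
\text{Regret}_{\mathrm{AUC}}(f)\ge \frac{\delta\,|E|}{n^+n^-}.
\end{equation}
Combining with Step 1 yields
\begin{equation}
\text{Regret}_{\mathrm{Acc}}\le \frac{n^+n^-}{n\delta}\,\text{Regret}_{\mathrm{AUC}},
\end{equation}
which is the upper bound for $\Psi_{\mathrm{AUC}\to\mathrm{Acc}}$. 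To match the stated expression as an equality, we exhibit a predictor that performs a single swap across the boundary between two items of margin exactly $\delta$. Both inequalities are then tight, so the supremum is attained.

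\textbf{Step 3: each error forces NDCG regret.} For the listwise case, a misclassified positive-side item is displaced from the top block to a position at most $n$. By monotonicity of the discount $w(\cdot)$, the swap loses at least $\delta\,(w(p)-w(q))\ge\delta\,w(n)$ in unnormalised DCG. Dividing by $\mathrm{IDCG}_n$ gives a per-error normalised loss of at least $\delta w(n)/\mathrm{IDCG}_n$. The same counting argument as in Step 2 then gives the factor $\mathrm{IDCG}_n/(n\delta w(n))$, with tightness again from a single-swap construction.

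\textbf{Main obstacle.} The hardest step is showing that distinct misclassifications produce \emph{additive}, non-overlapping ranking losses. If two errors share an inversion partner, their losses could be double-counted. To handle this we plan to pair misclassified positives with misclassified negatives via a matching, so that each error owns a disjoint inversion. For NDCG we will use the rearrangement inequality to argue that the discounted loss of $m$ disjoint swaps is at least $m$ times the minimal single-swap loss $\delta w(n)$.
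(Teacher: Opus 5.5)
Your plan follows the paper's own route: count the misclassified set $E$, charge each error a fixed amount of ranking regret, divide by $n$, and claim tightness via one swap. It breaks where the paper's proof does.

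First, the ``WLOG'' in Step 2 is not without loss of generality. In the paper, Acc is computed from a fixed threshold on the scores. AUC and NDCG depend only on the order of the scores, while Acc does not. So a perfectly ordered predictor whose scores all exceed $0.5$ has zero AUC and NDCG regret but positive Acc regret, i.e.\ $\Psi(0)>0$ for both transfers. You have to \emph{assume} a rank-based classifier that labels the top $b=|\{i:\eta_i>0.5\}|$ items positive ($b$, not the realized $n^+$). Under that assumption the conclusion of Step 2 is right, and your ``main obstacle'' is vacuous. False positives and false negatives come in equal number $m=|E|/2$, and every false positive is ranked above every false negative. Hence there are at least $m^2\ge m$ cross-boundary inversions, each with $\eta$-gap at least $2\delta$, which gives $\text{Regret}_{\mathrm{AUC}}\ge\delta|E|/(n^+n^-)$ with no matching needed.

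Your tightness claim, however, contradicts your own Step 1. With $\text{Regret}_{\mathrm{Acc}}=\frac1n\sum_{i\in E}|2\eta_i-1|$, one swap of margin-$\delta$ items gives $4\delta/n$, not $2/n$. Pairing via $\eta_{\mathrm{FN}}-\eta_{\mathrm{FP}}=\frac12\bigl(|2\eta_{\mathrm{FN}}-1|+|2\eta_{\mathrm{FP}}-1|\bigr)$ shows $\Psi_{\mathrm{AUC}\to\mathrm{Acc}}(\epsilon)\le\frac{2n^+n^-}{n}\epsilon$, which is strictly below the claimed value whenever $\delta<1/2$. The constant $n^+n^-/(n\delta)$ is attained only under the count definition $\text{Regret}_{\mathrm{Acc}}=|E|/n$ used in the paper. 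Even then it is attained only at particular $\epsilon$, since $\Psi$ is a step function on a fixed list.

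Second, and decisively for the listwise half, the inequality $\delta\,(w(p)-w(q))\ge\delta\,w(n)$ in Step 3 is false. A swap costs a \emph{difference} of discounts, which can be as small as $w(n-1)-w(n)$, far below $w(n)$. Take $n=10$, deterministic labels with five positives ($\delta=1/2$), and swap ranks $5$ and $6$ of the ideal list. Two items are misclassified, so $\text{Regret}_{\mathrm{Acc}}=0.2$. The claimed bound only gives $\frac{\mathrm{IDCG}_n}{n\delta\,w(n)}\cdot\frac{w(5)-w(6)}{\mathrm{IDCG}_n}=\frac{2(w(5)-w(6))}{10\,w(10)}\approx 0.021$. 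So the NDCG identity cannot be proved as stated, and the rearrangement argument you plan cannot rescue it. The paper's ``a classification error at position $k$ incurs at least $w(k)\delta$'' is the same mistake, so its proof offers no fix either.

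What does hold, under the rank-based classifier, comes from Abel summation. Set $w(n+1):=0$, and let $S_t,S^*_t$ be the sums of $\eta$ over the top $t$ positions of $f$'s ranking and of the ideal ranking. Then
$\mathrm{DCG}^*-\mathrm{DCG}(f)=\sum_{t=1}^{n}\bigl(w(t)-w(t+1)\bigr)(S^*_t-S_t)$,
every term is nonnegative, and the $t=b$ term is at least $\bigl(w(b)-w(b+1)\bigr)\delta|E|$. This gives the constant $\mathrm{IDCG}_n/\bigl(n\delta\,(w(b)-w(b+1))\bigr)$, attained by the adjacent swap at the boundary. The boundary differential $w(b)-w(b+1)$ replaces $w(n)$.
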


This stability arises because any predictor that minimizes ranking regret must correctly separate instances with higher $\eta$ from those with lower $\eta$, whose detailed proof is in Appendix~\ref{appdix:ranking_to_pointwise}. 

\textit{Remark}: The dependency on $1/\delta$ reflects the inherent difficulty of classification near the decision boundary. In scenarios where $\delta\to 0$, the transfer bound can be further refined by adopting low-noise conditions\citep{clemenccon2008ranking, agarwal2014surrogate}, substituting the hard margin with a density-based control over the mass around $\eta(x)=0.5$. This ensures that the stability of ranking-to-pointwise transfer remains theoretically grounded even in non-separable domains.

\subsubsection{Asymmetry between Pairwise and Listwise Transfer}

A critical finding is the fundamental asymmetry on regret between $\mathcal{G}_R$ (AUC) and $\mathcal{G}_L$ (NDCG). 

\begin{theorem}[Asymmetry of Regret Transfer]
\label{thm:asymmetry}
Consider a ranking list of size $n$ with $n^+$ positive and $n^-$ negative instances. Let $w(r) = 1/\log(1+r)$ be the weighting function, with $\Delta_{\max} = w(1) - w(2)$ and $\Delta_{\min} = w(n-1) - w(n)$ denoting its maximum and minimum differential. The transfer functions $\Psi$ between Pairwise regret and Listwise regret satisfy:
    \begin{equation}
    \begin{aligned}
        \Psi_{\mathrm{AUC} \to \mathrm{NDCG}}(\epsilon) = \left( \frac{\Delta_{\max} \cdot n^+ n^-}{\sum_{i=1}^{n^+} w(i)} \right) \cdot \epsilon,\quad 
        \Psi_{\mathrm{NDCG} \to \mathrm{AUC}}(\epsilon) = \left( \frac{\sum_{i=1}^{n^+} w(i)}{n^+ n^- \cdot \Delta_{\min}} \right) \cdot \epsilon
    \end{aligned}
    \end{equation}
\end{theorem}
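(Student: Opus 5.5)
I will prove both transfer bounds on a fixed list of $n$ items with binary labels, $n^+$ positives and $n^-$ negatives. By Theorem~\ref{thm:inter_group_relations}, AUC and NDCG share the same Bayes-optimal set, namely the permutations that place every positive above every negative. Both regrets therefore vanish exactly there, and the task reduces to comparing two nonnegative functionals on the permutation group. I normalize as in the statement:
\[
\mathrm{Regret}_{\mathrm{AUC}}(\sigma)=\frac{\#\{\text{mis-ordered (neg above pos) pairs}\}}{n^+n^-}=\frac{D(\sigma)}{n^+n^-},
\qquad
\mathrm{Regret}_{\mathrm{NDCG}}(\sigma)=\frac{\mathrm{IDCG}-\mathrm{DCG}(\sigma)}{\mathrm{IDCG}},
\]
where $\mathrm{IDCG}=\sum_{i=1}^{n^+}w(i)$. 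The whole argument rests on the identity $\mathrm{IDCG}-\mathrm{DCG}(\sigma)=\sum_{\text{positives }p}\big(w(r^*_p)-w(r_p)\big)$. Here positives are matched in order, so the $j$-th highest positive has ideal rank $r^*_p=j$ and actual rank $r_p\ge j$. Its displacement $r_p-j$ equals the number of negatives ranked above it, and summing these displacements over all positives gives exactly $D(\sigma)$.

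The key step is to bound each positive's contribution by its displacement. Because $w$ is decreasing and convex on $[1,n]$, its successive differences $w(r)-w(r+1)$ are decreasing in $r$ and lie in $[\Delta_{\min},\Delta_{\max}]$. Telescoping therefore gives, for each positive,
\[
\Delta_{\min}\,(r_p-j)\;\le\; w(j)-w(r_p)\;\le\;\Delta_{\max}\,(r_p-j).
\]
Summing over positives yields the sandwich
\[
\Delta_{\min}\,D(\sigma)\le \mathrm{IDCG}-\mathrm{DCG}(\sigma)\le \Delta_{\max}\,D(\sigma).
\]

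For the AUC$\to$NDCG direction, suppose $\mathrm{Regret}_{\mathrm{AUC}}\le\epsilon$, so $D\le \epsilon n^+n^-$. The upper half of the sandwich then gives
\[
\mathrm{Regret}_{\mathrm{NDCG}}\le \frac{\Delta_{\max}\, n^+n^-}{\sum_{i\le n^+}w(i)}\,\epsilon .
\]
For the NDCG$\to$AUC direction, suppose $\mathrm{Regret}_{\mathrm{NDCG}}\le\epsilon$. The lower half gives $D\le \epsilon\,\mathrm{IDCG}/\Delta_{\min}$, and dividing by $n^+n^-$ yields the second formula. Since $\Psi$ is defined as a supremum, these inequalities establish the stated expressions as upper bounds. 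I would interpret the equalities in the theorem in that sense, and then argue near-tightness at small $\epsilon$. For AUC$\to$NDCG, the extremal configuration swaps the top positive at rank $1$ with a negative at rank $2$: one discordant pair costing exactly $\Delta_{\max}$. For NDCG$\to$AUC, it swaps the last positive with a negative at the bottom, ranks $n-1$ and $n$, costing exactly $\Delta_{\min}$ per discordant pair. Both configurations attain the constants at $\epsilon=1/(n^+n^-)$, the smallest nonzero AUC regret.

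The main obstacle is the claim that $r_p-j$ equals the number of negatives above $p$, together with the coupling of ideal ranks to positives. This correspondence must be made rigorous, since DCG depends only on the set of positive ranks. I would handle it by sorting the positive ranks $r_{(1)}<\dots<r_{(n^+)}$ and noting that $r_{(j)}-j$ counts the negatives preceding the $j$-th positive. The sum over $j$ is then precisely $D$, with no assumption on how ties among positives are resolved. A second subtlety is that $\Delta_{\min}$ as defined uses $w(n-1)-w(n)$. The lower bound on each increment holds only because every rank involved is at most $n$, so every increment is at least the last one. This follows from convexity of $1/\log(1+r)$, which I would verify by checking that its second derivative is positive for $r\ge1$.
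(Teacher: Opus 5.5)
Your derivation of the two inequalities is the paper's own proof. You sort the positive ranks, note that $r_{(j)}-j$ counts the negatives above the $j$-th positive so the displacements sum to $D$ (the paper's $I$), and sandwich each telescoped loss $w(j)-w(r_{(j)})$ between $\Delta_{\min}(r_{(j)}-j)$ and $\Delta_{\max}(r_{(j)}-j)$ using convexity of $w$. That part is correct. Reading the displayed ``$=$'' as ``$\le$'' is forced anyway, since for a fixed list $\Psi$ is a supremum over finitely many permutations and hence a step function of $\epsilon$.

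What fails is your tightness claim. A ranking with AUC regret exactly $1/(n^+n^-)$ has a single discordant pair. Its positive ranks are therefore $\{1,\dots,n^+-1,n^++1\}$, and its NDCG loss is $w(n^+)-w(n^++1)$. This equals $\Delta_{\max}$ only if $n^+=1$, and equals $\Delta_{\min}$ only if $n^-=1$. Your swap at ranks $1,2$ leaves a negative above \emph{every} positive, so $D=n^+$ rather than $1$. Likewise, a positive at rank $n-1$ with a negative at rank $n$ already has $n^--1$ negatives above it. In both cases you are pricing the marginal cost of one adjacent swap, not the total regret relative to the optimum.

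Evaluated correctly, your two configurations are in fact the extremal ones, but the constants they produce are averages, not endpoint differences. Write $d_j=r_{(j)}-j$, which is nondecreasing in $j$ and at most $n^-$, and $\Delta(t)=w(t)-w(t+1)$, which is decreasing.
\begin{itemize}
\item Upper bound: $w(j)-w(j+d_j)\le d_j\Delta(j)$ together with Chebyshev's sum inequality gives $\mathrm{IDCG}-\mathrm{DCG}\le\frac{w(1)-w(n^++1)}{n^+}\,D$, with equality when one negative sits on top.
\item Lower bound: each loss is at least $d_j$ times the average of $\Delta$ over $[n^+,n-1]$, so $\mathrm{IDCG}-\mathrm{DCG}\ge\frac{w(n^+)-w(n)}{n^-}\,D$, with equality when the last positive drops to rank $n$.
\end{itemize}
The first slope is strictly below $\Delta_{\max}$ unless $n^+=1$, and the second is strictly above $\Delta_{\min}$ unless $n^-=1$.

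So you have proved valid but in general non-sharp linear bounds. The paper's attainability remark ($r_1=2$, resp.\ $r_{n^+}=n$) has exactly the same defect, so following it will not repair the argument. Either drop the tightness claim or restrict it to those degenerate cases. This is not cosmetic: Theorem~\ref{thm:rates} reads its rates off these constants, and in the balanced case both sharp constants are $\Theta(\log n)$.
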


Detailed proof is in Appendix~\ref{appdix:proof_asymmetry}. It demonstrates that the coupling strength between metric groups is not a static constant, but a dynamic property governed by the list scale $n$ and the label density. In industrial applications, the transition under extremely sparse production logs often leads to a severe amplification of the transfer gap. To provide a theoretical basis for this circumstance, we now analyze the asymptotic growth rates of the transfer coefficients under representative label distributions.

\begin{theorem}[Asymptotic Transfer Rates]
\label{thm:rates}
The asymptotic growth rates of the transfer coefficients $C_{R \to L}$ and $C_{L \to R}$ exhibit distinct behaviors under the following label density scenarios:
\begin{enumerate}
    \item Balanced $n^+, n^- \sim O(n)$:
    \begin{equation}
        C_{R \to L} \sim O(n \log n), \quad C_{L \to R} \sim O(\log n).
    \end{equation}
    
    \item Imbalanced $n^+ \sim O(1), n^- \sim O(n)$:
    \begin{equation}
        C_{R \to L} \sim O(n), \quad C_{L \to R} \sim O(\log^2 n).
    \end{equation}
\end{enumerate}
\end{theorem}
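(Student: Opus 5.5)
The plan is to read off the coefficients $C_{R\to L}$ and $C_{L\to R}$ from Theorem~\ref{thm:asymmetry} and to estimate the three ingredients asymptotically with $w(r)=1/\log(1+r)$. These ingredients are the partial sum $S(n^+)=\sum_{i=1}^{n^+} w(i)$, the top differential $\Delta_{\max}=w(1)-w(2)$, and the bottom differential $\Delta_{\min}=w(n-1)-w(n)$. In this notation
\[
C_{R\to L}=\frac{\Delta_{\max}\, n^+ n^-}{S(n^+)}, \qquad C_{L\to R}=\frac{S(n^+)}{n^+ n^-\,\Delta_{\min}} .
\]
The quantity $\Delta_{\max}=1/\log 2-1/\log 3$ is a positive constant, so only $S$ and $\Delta_{\min}$ need work.

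\textbf{Step 1: estimates.} For the partial sum, compare with an integral. Since $\int_2^{m} dt/\log t=\mathrm{li}(m)+O(1)\sim m/\log m$, we get $S(m)=\Theta(m/\log m)$ as $m\to\infty$, while $S(m)=\Theta(1)$ when $m=O(1)$. For $\Delta_{\min}$, apply the mean value theorem to $g(r)=1/\log(1+r)$, whose derivative is $g'(r)=-1/\big((1+r)\log^2(1+r)\big)$. This gives $\Delta_{\min}=\Theta\big(1/(n\log^2 n)\big)$.

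\textbf{Step 2: balanced case.} Take $n^+,n^-=\Theta(n)$, so $n^+n^-=\Theta(n^2)$ and $S(n^+)=\Theta(n/\log n)$. Substituting,
\[
C_{R\to L}=\Theta\!\left(\frac{n^2}{n/\log n}\right)=\Theta(n\log n), \qquad C_{L\to R}=\Theta\!\left(\frac{(n/\log n)\cdot n\log^2 n}{n^2}\right)=\Theta(\log n).
\]
Both match the claim.

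\textbf{Step 3: imbalanced case.} Take $n^+=O(1)$ and $n^-=\Theta(n)$, so $S(n^+)=\Theta(1)$ and $n^+n^-=\Theta(n)$. Substituting,
\[
C_{R\to L}=\Theta(n), \qquad C_{L\to R}=\Theta\!\left(\frac{1}{n\cdot 1/(n\log^2 n)}\right)=\Theta(\log^2 n).
\]
Both again match.

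\textbf{Main obstacle.} The only delicate point is the uniform asymptotics of $S(n^+)$ when $n^+$ is a constant fraction of $n$. Here I would use the bounds $w(n^+)\,n^+\le S(n^+)\le w(1)+\int_1^{n^+}w(t)\,dt$. Since $w$ is decreasing, these bounds give $S(n^+)=\Theta(n^+/\log n^+)$, and $\log n^+=\log n+O(1)$ lets us replace $\log n^+$ by $\log n$. The $O(\cdot)$ in the statement should then be read as tight $\Theta$ rates, with constants depending only on the label proportions.
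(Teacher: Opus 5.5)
Your proposal is correct and takes essentially the same route as the paper: you substitute the asymptotic orders $S(n^+)=\Theta(n/\log n)$ or $\Theta(1)$, $\Delta_{\max}=\Theta(1)$ and $\Delta_{\min}=\Theta\bigl(1/(n\log^2 n)\bigr)$ into the coefficients of Theorem~\ref{thm:asymmetry}. Your version is more careful than the paper's, which only writes $\Delta_{\min}\approx |w'(n)|$ and $\sum_{i\le n^+} w(i)\approx\int_1^n w$, whereas you justify both with the mean value theorem and two-sided integral bounds, and you make explicit that the label counts must be read as $\Theta(\cdot)$, which the $C_{L\to R}$ bounds genuinely require.
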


Detailed proof is in Appendix~\ref{appdix:proof_rates}. Theorem \ref{thm:rates} reveals a fundamental scaling asymmetry between Pairwise and Listwise metrics as the system size $n$ increases. Specifically, $C_{R\to L}$ grows polynomially while $C_{L\to R}$ grows only logarithmically. This confirms that optimizing NDCG imposes a tighter constraint on AUC and provides a more robust guarantee for ranking quality. Conversely, relying on AUC as a proxy for top-heavy online objectives faces error amplification. Beyond Theorem~\ref{thm:rates}, the transfer coefficients $C_{R\to P}$ and $C_{L\to P}$ in Theorem~\ref{thm:ranking_to_pointwise} can also be asymptotically estimated, which reveals a similar pattern where NDCG impose a significantly tighter constraint on Acc than AUC:
\begin{enumerate}
    \item Balanced $n^+, n^- \sim O(n)$:
\begin{equation}
C_{R \to P} \sim O\left(n\right), \quad C_{L\to P} \sim O\left(1\right).
\end{equation}
    \item Imbalanced $n^+ \sim O(1), n^- \sim O(n)$:
    \begin{equation}
C_{R \to P} \sim O\left(1\right), \quad C_{L\to P} \sim O\left(\frac{\log n}{n}\right).
\end{equation}
\end{enumerate}

This explains why offline AUC gains often vanish in online production.

\section{Experiments}

In this section, we conduct controlled simulations and real-world experiments to empirically validate our theoretical framework. We focus on (\textit{i}) the Pointwise transfer failure and (\textit{ii}) the regret scaling asymmetry between Pairwise and Listwise metrics by adopting a two-tier experimental design:

\begin{itemize}
    \item \textbf{Structural Simulation}: Since the theoretical results represent the worst-case behavior of the regret manifold, we utilize structural simulation to probe regret transfer, which allows us to inject specific error patterns of different loss functions.
    \item \textbf{Real-world Experiments}: We utilize real-world optimization tasks of representative losses (BCE, BPR\cite{rendle2009bpr}, ListNet\cite{cao2007learningtorank}) and report the direct performance comparisons across different metrics to corroborate our theoretical findings.
\end{itemize}

\subsection{Regret Manifold Simulation}

\textbf{Data Generation.} To evaluate the scaling properties, we simulate a ranking environment with $n = 1,000$ items per list. We generate the ground-truth Bayes-optimal relevance scores $\eta \in [0.01, 0.99]^n$ sampled from a uniform distribution. This construction allows for exact calculation of the Bayes-optimal performance $M(f^*_{\text{Bayes}})$, serving as the denominators for our regret estimations.

\textbf{Loss-Specific Error Modeling.} Rather than relying solely on the stochasticity of neural training, we explicitly model the prediction function $f$ to characterize the intrinsic behavior of different losses:
\begin{itemize}
    \item \textbf{Pointwise}: We simulate a classifier that resolves the decision boundary at $0.5$ but remains agnostic to intra-class ordering. The error is modeled as random perturbations that preserve the class label but ignore relative ranks within the same class.
    \item \textbf{Pairwise}: We model a globally consistent ranker that is insensitive to position. We introduce an error probe by swapping elements at the top of the list, which results in a negligible impact on AUC regret but a significant impact on NDCG regret.
    \item \textbf{Listwise}: We model a position-aware ranker where the noise variance is suppressed by a factor of $1/\log(i+1)$ at rank $i$. This reflects the strong constraint listwise objectives exert on the head of the distribution.
\end{itemize}

\textbf{Regret Visualization and Metrics.} To capture the full spectrum of the regret manifold, we generate 500 snapshots for each loss type by varying a convergence parameter $\alpha \in [0, 1]$, where $f \to \eta$ as $\alpha \to 1$. For each snapshot, we estimate the regret for metric $M$ as:
$$ \text{Regret}_M(f) = 1 - \frac{M(f)}{M(f^*_{\text{Bayes}})} $$
We visualize these snapshots in a 3D Regret Space ($\text{Regret}_{\text{Acc}} \times \text{Regret}_{\text{AUC}} \times \text{Regret}_{\text{NDCG}}$) and cut a slice (NDCG vs. Acc) to observe the global geometry, as shown in Figure~\ref{fig:regret_clouds}. Besides, we examine the average regret between $\text{Regret}_{\text{AUC}}$ and $\text{Regret}_{\text{NDCG}}$ to empirically verify the scaling asymmetry between pairwise and listwise stability in Table~\ref{tab:regret_metrics}.

\begin{figure*}[t]
    \centering
    \begin{minipage}{1.0\linewidth}
        \centering
        \includegraphics[width=0.85\linewidth]{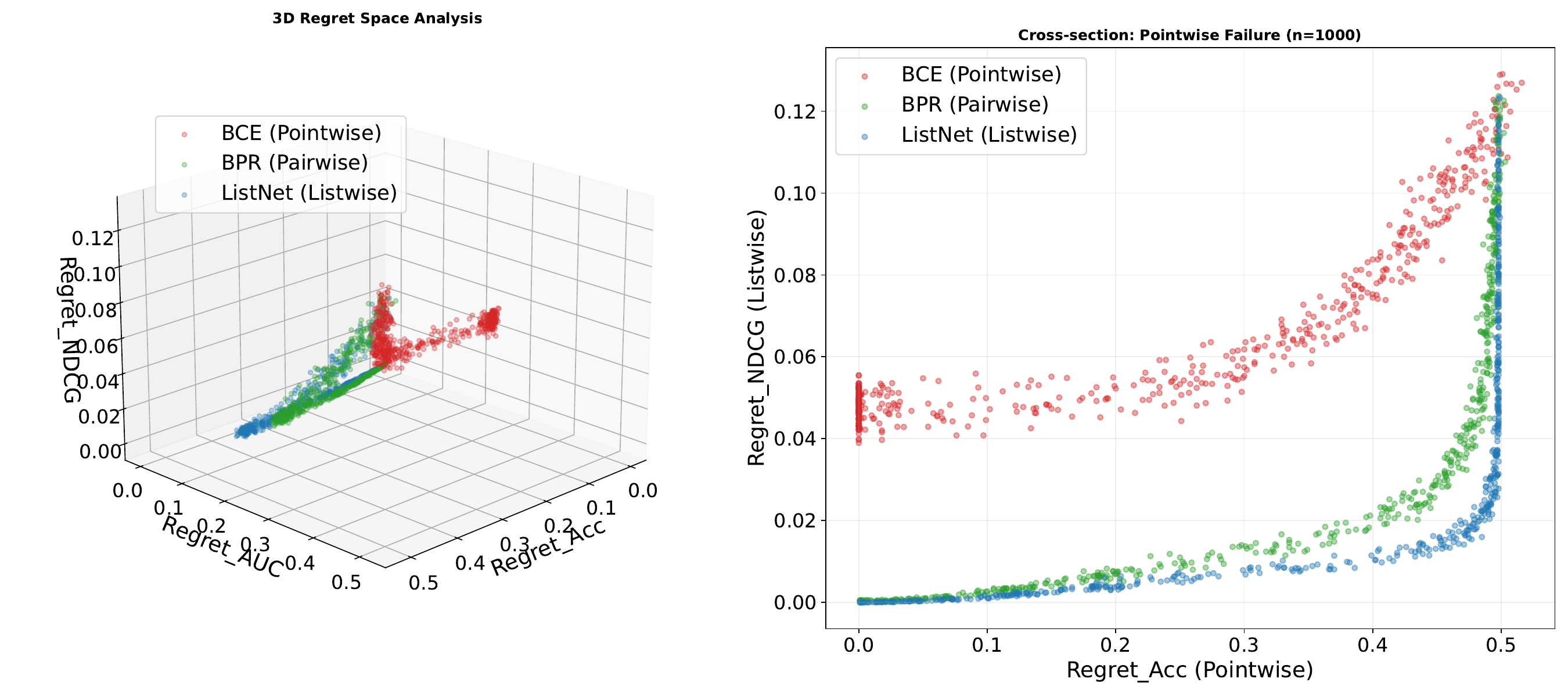} 
        \caption{Regret space visualization for Simulated Pointwise, Pairwise, and Listwise losses ($n=1,000$). Left: 3D regret landscape. Right: Cross-section of Acc vs. NDCG regret.}
        \label{fig:regret_clouds}
    \end{minipage}

    \vspace{15pt} 

    \footnotesize 
    \begin{minipage}[t]{0.48\linewidth}
        \centering
        \captionof{table}{Simulation Regret Comparison}
        \label{tab:regret_metrics}
        \vspace{5pt}
        \begin{tabular}{lccc}
            \toprule
            \textbf{Loss} & $r_{\text{acc}}$ & $r_{\text{auc}}$ & $r_{\text{ndcg}}$ \\
            \midrule
            Pointwise     & \textbf{0.2325} & 0.3270 & 0.0667 \\
            Pairwise     & 0.3329 & 0.2104 & 0.0361 \\
            Listwise & 0.3405 & \textbf{0.1756} & \textbf{0.0283} \\
            \bottomrule
        \end{tabular}
    \end{minipage}
    \hfill
    \begin{minipage}[t]{0.48\linewidth}
        \centering
        \captionof{table}{Real-world Performance on ML-1M}
        \label{tab:ml1m_results}
        \vspace{5pt}
        \begin{tabular}{lccc}
            \toprule
            \textbf{Loss} & Recall@10 & AUC & NDCG@10 \\
            \midrule
            BCE     & 0.1574 & 0.9047 & 0.2140 \\
            BPR     & 0.1871 & \textbf{0.9212} & 0.2349 \\
            ListNet & \textbf{0.1935} & 0.9207 & \textbf{0.2434} \\
            \bottomrule
        \end{tabular}
    \end{minipage}
\end{figure*}

\subsection{Real-World Data Experiments}

In the real-world setting, we evaluate our theoretical framework on the \textbf{MovieLens-1M} dataset. Since the ground-truth Bayes-optimal probability $\eta$ is latent in real datasets, we directly report the observed performance across standard metrics instead of calculating regrets. This allow us to verify if the optimization efficiency of different losses aligns with our theoretical stability bounds. The experimental results are summarized in Table~\ref{tab:ml1m_results}.

\subsection{Discussion}

The empirical results from both simulations and real-world experiments provide a consistent narrative that validates our theoretical framework. As shown in Figure~\ref{fig:regret_clouds} and Table~\ref{tab:regret_metrics}, we observe a distinct Pointwise Transfer Failure. While the Pointwise setting achieves the lowest classification regret ($r_{\text{acc}} = 0.2325$), it consistently suffers from the highest ranking regrets. This disconnect underscores the fact that minimizing pointwise classification error is insufficient for high-quality ranking. 

Regarding the asymmetric transfer, although the structural simulations capture the directional trend where the average $r_{\text{auc}}$ and $r_{\text{ndcg}}$ for Listwise setting is notably lower than that of the Pairwise objective, the fixed environment of the simulation may not fully demonstrate the scaling divergence between these two paradigms. Instead, the real-world performance provides a more pronounced demonstration of this structural gap. As shown in Table~\ref{tab:ml1m_results}, while BPR achieves a marginally superior AUC ($0.9212$), reflecting its inherent pairwise focus, ListNet consistently outperforms it in critical top-heavy metrics, including Recall@10 ($0.1935$) and NDCG@10 ($0.2434$). The fact that ListNet maintains superior ranking precision in a full-sort regime provides strong empirical support for our stability analysis: by explicitly modeling the listwise probability distribution, ListNet effectively suppresses the ranking noise that frequently impairs pairwise transfer in large-scale item spaces.
\section{Conclusion}

In this paper, we propose a unified theoretical framework to quantify the direct relationships between evaluation metrics, addressing the "Metric Mismatch" problem where offline gains fail to translate into online performance. Moving beyond surrogate-to-metric consistency, we provide an analytical framework to understand how performance trade-offs are governed by the underlying mathematical structures of metrics.

We categorize metrics into Pointwise, Pairwise, and Listwise groups and prove a strict structural hierarchy where ranking optimality inherently satisfies classification requirements. Through the regret transfer function, we identify a fundamental "Pointwise Transfer Failure," confirming that optimal classification provides zero stability for ranking quality. Furthermore, we establish that the coupling strength between Pairwise and Listwise metrics is scale-dependent, exhibiting a significant scaling asymmetry. These findings offer significant practical implications for machine learning system design, explaining why minor fluctuations in AUC can lead to disproportionate failures in top-heavy online objectives like NDCG. This provides a strong theoretical justification in large-scale recommender systems to ensure that offline improvements faithfully yield online value.

\bibliographystyle{unsrtnat}
\bibliography{sample-base}

\newpage
\appendix
\section{Detailed Definitions of Evaluation Metrics}
\label{app:metric_definitions}

\subsection{Pointwise Metrics}
Pointwise metrics focus on independent item identification, treating each instance as a standalone classification task. Let $n = |\mathscr{D}|$ denote the total number of instances, and $n^+ = \sum_{i=1}^n y_i$ and $n^- = n - n^+$ denote the number of positive and negative instances in the dataset, respectively. For truncation at position $k$, the weight incorporates an indicator function $\mathbb{I}(i \le k)$. $\phi(y) = y$ unless specifically defined.

\textbf{Accuracy (Acc)} measures the proportion of correctly classified instances, which relies on a decision threshold $\tau$ (typically $0.5$ for calibrated scores). In our unified framework, it can be viewed as a global metric where each position $i$ in the ranked list $\sigma_f$ is assigned a uniform weight,
\begin{equation}
    \text{Acc} = \frac{1}{n} \sum_{i=1}^n \mathbb{I}(y_{\sigma_f(i)} = \hat{y}_{\sigma_f(i)}), \quad  w(i, y, \mathscr{D}) = \frac{1}{n}, \quad \phi(y, \hat{y}) = \mathbb{I}(y = \hat{y})
\end{equation}
\textbf{Precision@$k$ (P@$k$)} calculates the fraction of relevant items within the top-$k$ positions of the list.
\begin{equation}
    \text{P}@k = \frac{1}{k} \sum_{i=1}^k y_{\sigma_f(i)} \quad  \quad w(i, y, \mathscr{D}) = \frac{1}{k} \mathbb{I}(i \le k)
\end{equation}
\textbf{Recall@$k$ (R@$k$)} evaluates the proportion of total positive instances captured within top-$k$ results.
\begin{equation}
    \text{R}@k = \frac{1}{n^+} \sum_{i=1}^k y_{\sigma_f(i)} \quad  \quad w(i, y, \mathscr{D}) = \frac{1}{n^+} \mathbb{I}(i \le k)
\end{equation}

\subsection{Pairwise Metric}
\textbf{AUC} measures the probability of correct pairwise ordering, which can be expressed as a linear sum over positive instances based on the number of negative instances ranked below them. Here we omit ties and focus on simplified tasks where
\begin{equation}
    \text{AUC} = \frac{\sum_{i:y_{\sigma_f(i)}=1} \sum_{j:y_{\sigma_f(j)}=0} \mathbb{I}(i < j)}{n^+ n^-},\quad  w(i, y, \mathscr{D}) = \frac{\sum_{j=i+1}^n (1 - y_{\sigma_f(j)})}{n^+ n^-}
\end{equation}
\subsection{Listwise Metrics}
These metrics are position-sensitive, assigning higher weights to items at the top of the ranked list to reflect user attention patterns. 

\paragraph{NDCG:} Discounted Cumulative Gain measures utility with logarithmic decay, where NDCG@$k$ is normalized by the Ideal DCG (IDCG).
\begin{align}
    \text{NDCG} &= \frac{1}{\text{IDCG}} \sum_{i=1}^n \frac{y_{\sigma_f(i)}}{\log_2(i+1)} & w(i,y,\mathscr{D}) &= \frac{1}{\text{IDCG} \cdot \log_2(i+1)} \\ 
    \text{NDCG}@k &= \frac{1}{\text{IDCG}_k} \sum_{i=1}^k \frac{y_{\sigma_f(i)}}{\log_2(i+1)} & w(i,y,\mathscr{D}) &= \frac{\mathbb{I}(i \le k)}{\text{IDCG}_k \cdot \log_2(i+1)} 
\end{align}

\textbf{MAP} averages the precision values at each rank where a relevant item is correctly identified. 
\begin{align}
    \text{MAP} &= \frac{1}{n^+} \sum_{i:y_{\sigma_f(i)}=1} \text{P}@i & w(i, y, \mathscr{D}) &= \frac{\sum_{j=1}^i y_{\sigma_f(j)}}{n^+ \cdot i} \\ 
    \text{MAP}@k &= \frac{1}{\min(k, n^+)} \sum_{i:y_{\sigma_f(i)}=1, i \le k} \text{P}@i & w(i, y, \mathscr{D}) &= \frac{\sum_{j=1}^i y_{\sigma_f(j)}}{\min(k,n^+) \cdot i} \cdot \mathbb{I}(i \le k) 
\end{align}

\textbf{MRR} focuses exclusively on the rank of the first relevant item in the list. 
\begin{align}
    \text{MRR} &= \sum_{i=1}^n \frac{y_{\sigma_f(i)}}{i} \prod_{j=1}^{i-1} (1 - y_{\sigma_f(j)}) & w(i, y, \mathscr{D}) &= \frac{\prod_{j=1}^{i-1} (1 - y_{\sigma_f(j)})}{i} \\ 
    \text{MRR}@k &= \sum_{i=1}^k \frac{y_{\sigma_f(i)}}{i} \prod_{j=1}^{i-1} (1 - y_{\sigma_f(j)}) & w(i, y, \mathscr{D}) &= \frac{\prod_{j=1}^{i-1} (1 - y_{\sigma_f(j)})}{i} \cdot \mathbb{I}(i \le k) 
\end{align}
\section{Proofs of Metrics Properties}
\subsection{Proof of Theorem~\ref{thm:intra_alignment} \& \ref{thm:truncation_monotonicity}}
\label{appdix:proof_intra}
To maintain consistency with our structural taxonomy, we prove the intra-group properties by following the sequence from Pointwise to Listwise groups.

\begin{lemma}[Bayes-Optimality of $\mathcal{G}_P$]
\label{lem:pointwise_logic}
Under the binary hypothesis $y \in \{0,1\}$:
\begin{enumerate}
    \item \textbf{Equivalence:} Conditioned on a fixed number of positives $n^+$ per list, P@$k$ and R@$k$ are Bayes-equivalent, i.e., $\mathrm{P}@k \equiv_{\mathscr{B}} \mathrm{R}@k$.
    \item \textbf{Inclusion:} The Bayes-optimal set of Acc is a subset of the truncated pointwise optimal sets, s.t. $\mathrm{Acc} \preceq_{\mathscr{B}} \mathrm{P}@k/\mathrm{R}@k$.
\end{enumerate}
\end{lemma}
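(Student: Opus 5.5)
We work conditionally on a list $\mathscr{D}$ whose labels are drawn independently given the features, with $\eta_i = \mathbb{P}(y_i = 1 \mid \bm{x}_i)$. Every metric involved is linear in the labels once the permutation $\sigma_f$ is fixed, so by linearity of expectation we may replace each $y_{\sigma_f(i)}$ by $\eta_{\sigma_f(i)}$. Both claims then reduce to characterising the maximisers of deterministic functionals of $\sigma_f$, one list at a time. If the risk is a nonnegative combination of per-list risks, a predictor is Bayes-optimal exactly when it is optimal on almost every list. This holds when $\mathscr{F}$ is rich enough to realise any ordering of the $\eta$ values, an assumption we state explicitly.

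\textbf{Step 1 (equivalence).} Fix $n^+$. For both metrics the expected utility has the form
\[
\mathbb{E}[\mathrm{P}@k] = \frac{1}{k}\sum_{i\le k}\eta_{\sigma_f(i)}, \qquad \mathbb{E}[\mathrm{R}@k] = \frac{1}{n^+}\sum_{i\le k}\eta_{\sigma_f(i)}.
\]
With $n^+$ fixed, both are positive multiples of the same quantity $S_k(f) = \sum_{i\le k}\eta_{\sigma_f(i)}$. Hence they have identical maximiser sets: the predictors whose top-$k$ set consists of $k$ items with the largest $\eta$, ties at the boundary being arbitrary. This gives $\mathrm{P}@k \equiv_{\mathscr{B}} \mathrm{R}@k$.

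The delicate point is that $n^+$ is random when conditioning on the features alone. We therefore adopt the stated conditioning: the expectation is taken over label configurations with the given $n^+$. Under this conditioning, exchangeability still gives that the conditional marginal $\mathbb{P}(y_i = 1 \mid n^+)$ is monotone in $\eta_i$. We prove this monotonicity with a swap-coupling argument, comparing configurations in which $y_i$ and $y_j$ are exchanged. The common maximiser is then the same top-$k$ selection by $\eta$.

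\textbf{Step 2 (inclusion).} We work in the calibrated regime, where $\mathrm{Acc}$ is computed through the scores, with prediction $\hat y = \mathbb{I}(f > 0.5)$. The Bayes-optimal set of $\mathrm{Acc}$ consists of the $f$ with $\mathbb{I}(f(\bm{x}) > 0.5) = \mathbb{I}(\eta(\bm{x}) > 0.5)$ almost surely. This set alone does not determine the top-$k$ ordering. To obtain an inclusion we must interpret the optimal set of $\mathrm{Acc}$ at the level of scores, as the paper's unified framework does. There, $\mathrm{Acc}$ is viewed through the thresholded ranking: items with $\eta > 0.5$ are placed above those with $\eta < 0.5$, and optimality of the score function is demanded pointwise in the strongest sense, namely that $f$ is a calibrated estimate, $f = \eta$, or a strictly monotone transform of it.

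Under this reading, any such $f$ orders items by $\eta$. Its top-$k$ set is therefore a set of $k$ largest-$\eta$ items, which maximises $S_k$. By Step 1, $f$ then lies in both $\mathscr{F}^*_{\mathrm{P}@k}$ and $\mathscr{F}^*_{\mathrm{R}@k}$.

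\textbf{Main obstacle.} The main obstacle is Step 2. A purely threshold-based Acc optimum does not control the order within a class, and this is exactly the failure noted in Theorem~\ref{thm:pointwise_failure}. We would first try to restrict to calibrated predictors, meaning the Acc optimum achieved by $f = \eta$ in the $\mathbb{E}$-sense together with proper-scoring calibration. If that fails, we would instead prove the weaker inclusion that holds in the regime $k = n^+$, or for $k$ at the class boundary. In that regime thresholding alone does determine the top-$k$ set, namely all items with $\eta > 0.5$ when their number equals $k$.
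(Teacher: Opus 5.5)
Your Step~1 is the paper's argument. With $n^+$ fixed, $\mathrm{R}@k=\frac{k}{n^+}\,\mathrm{P}@k$ on every list, so the two risks are affinely related with positive slope and share their minimisers. The swap-coupling on conditional marginals is extra care the paper skips. It is what justifies calling the common maximiser the top-$k$ by $\eta$ after conditioning on $n^+$, but the equivalence itself does not need it.

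The gap is in Step~2. Declaring that Acc-optimality means ``$f=\eta$ or a strictly monotone transform of $\eta$'' does not prove the inclusion; it changes the statement. In the paper's framework the minimisers of $R_{\mathrm{Acc}}$ are the sign-consistent predictors, $(f(\bm{x})-\tau)(\eta(\bm{x})-0.5)>0$. The set you substitute lies inside the ranking-optimal set $\mathscr{F}^*_L$, which the paper shows is a \emph{proper} subset of $\mathscr{F}^*_{\mathrm{Acc}}$. Showing that ranking-optimal predictors are $\mathrm{P}@k$-optimal therefore says nothing about the remaining Acc-optimal predictors, and those are what the lemma is about.

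Your diagnosis of the obstacle is nonetheless correct: with noisy labels the inclusion is false, so this step cannot be closed without an extra hypothesis. Take $\mathscr{X}=\{\bm{x}_1,\bm{x}_2\}$, lists of size $n=2$, $\eta(\bm{x}_1)=0.9$, $\eta(\bm{x}_2)=0.6$, $\tau=0.5$, $f(\bm{x}_1)=0.55$, $f(\bm{x}_2)=0.8$ and $k=1$. Then $f$ is sign-consistent, hence Acc-optimal. On every list containing both points it ranks $\bm{x}_2$ first, so its expected $\mathrm{P}@1$ there is $0.6$ rather than $0.9$. Conditioning on $n^+=1$ gives conditional marginals $6/7$ versus $1/7$, with the same conclusion. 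Hence $f\notin\mathscr{F}^*_{\mathrm{P}@1}$, and likewise for $\mathrm{R}@1$. The mirror example below the threshold, e.g.\ $\eta=(0.4,0.1)$ and $f=(0.2,0.3)$, breaks the case where $k$ exceeds the number of items with $\eta>0.5$.

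The paper's own argument is that an Acc-optimal $f$ puts every positive above every negative, which fills the top-$k$ slots with positives. This reasons with realised labels, and it is sound exactly in the noise-free case $\eta(\bm{x})\in\{0,1\}$ almost surely. There, on every list, $\mathrm{P}@k=\min(k,n^+)/k$ and $\mathrm{R}@k=\min(k,n^+)/n^+$, which are the per-list maxima for every $k$. That is the assumption you should state and the argument you should run, and the calibrated reading should be dropped. Your fallback ``$k$ equals the number of items with $\eta>0.5$'' is valid but much narrower: for a fixed $k$ it helps only if that random count equals $k$ almost surely.
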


\begin{proof}

1. Consider that $n^+$ is fixed for any given sequence of instances $\mathscr{D}$. The expected utilities are 
\begin{equation}
    U_{\mathrm{P}@k} = \mathbb{E}[\frac{1}{k}\sum_{i=1}^k y_i],\quad U_{\mathrm{R}@k} = \mathbb{E}[\frac{1}{n^+}\sum_{i=1}^k y_i].
\end{equation}
Since $1/k$ and $1/n^+$ are positive constants, both metrics are maximized by any $f$ that selects the set of $k$ instances with the largest $\eta(\bm{x})$, confirming $\mathscr{F}^*_{Prec@k} = \mathscr{F}^*_{Rec@k}$.

2. Acc requires correct classification for all $x \in \mathscr{D}$ relative to an optimal threshold. Achieving $\mathscr{F}^*_{Acc}$ implies that all positive instances are correctly identified and placed above all negative instances. This inherently satisfies the requirement for P@$k$/R@$k$ to fill the top-$k$ slots with positive instances, hence $\mathscr{F}^*_{Acc} \subseteq \mathscr{F}^*_{\mathrm {P}@k}=\mathscr{F}^*_{\mathrm{R}@k}$.
\end{proof}

As for listwise metrics, we prove their property with the following metric family.
\begin{definition}[Top-Weighted Additive Metrics]
\label{def:g_add}
A ranking metric $\mathcal{M}$ belongs to the class of \textbf{top-weighted additive metrics} ($\mathcal{G}_{add}$) if its expected utility $U_{\mathcal{M}}(f)$ can be expressed as:
\begin{equation}
    U_{\mathcal{M}}(f) = \mathbb{E}_{\mathscr{D}} \left[ \sum_{i=1}^n w(i) \cdot y_{\sigma_f(i)} \right]
\end{equation}
subject to the following conditions:
\begin{enumerate}
    \item \textbf{Position-Dependency:} The weights $\bm{w} = \{w(i)\}_{i=1}^n$ form a fixed, strictly decreasing sequence $w(1) > w(2) > \dots > w(n) \ge 0$.
    \item \textbf{Label-Independency:} Each weight $w(i)$ depends only on the rank position $i$ and is independent of the realization of labels in $\mathscr{D}$.
    \item \textbf{Linearity of Expectation:} The expected utility satisfies $U_{\mathcal{M}}(f) = \mathbb{E}_{\mathscr{D}} \left[ \sum_{i=1}^n w(i) \eta(\bm{x}_{\sigma_f(i)}) \right]$, where $\eta(\bm{x}) = \mathbb{E}[y | \bm{x}]$ is the conditional probability.
\end{enumerate}
\end{definition}

The structural properties of $\mathcal{G}_{add}$ ensure that the maximization of expected utility can be decoupled across rank positions. By leveraging the linearity of expectation and the monotonic decay of weights, we establish the following properties.

\begin{lemma}[General Properties of $\mathcal{G}_{add}$]
\label{lem:g_add_properties}
For any metric $\mathcal{M} \in \mathcal{G}_{add}$ with weights $w(1) > \dots > w(n) \ge 0$:
\begin{enumerate}
    \item \textbf{Optimal Ordering:} $U_{\mathcal{M}}(f)$ is maximized iff $f$ sorts instances such that $\eta(\bm{x}_{\sigma_f(1)}) \ge \dots \ge \eta(\bm{x}_{\sigma_f(n)})$.
    \item \textbf{Truncation Inclusion:} For its truncated version $\mathcal{M}_{@k}$, $\mathscr{F}^*_{\mathcal{M}} \subseteq \mathscr{F}^*_{\mathcal{M}_{@k}}$.
\end{enumerate}
\end{lemma}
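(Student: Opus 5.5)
The plan is to reduce everything to a conditional, per-sequence statement and then use a rearrangement (exchange) argument. By the third condition of Definition~\ref{def:g_add}, $U_{\mathcal{M}}(f)=\mathbb{E}_{\mathscr{D}}\bigl[\sum_{i=1}^n w(i)\,\eta(\bm{x}_{\sigma_f(i)})\bigr]$. For a fixed realization of the instances $\bm{x}_1,\dots,\bm{x}_n$, the inner sum depends on $f$ only through the permutation $\sigma_f$. Since $\mathscr{F}$ is taken rich enough to realize any ordering, which is the same implicit assumption the paper uses for the Bayes-optimal sets, maximizing the expectation amounts to maximizing the inner sum pointwise in $\mathscr{D}$, up to null sets. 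So it suffices to show that, for fixed numbers $a_1,\dots,a_n$ (the values $\eta(\bm{x}_j)$), the quantity $S(\sigma)=\sum_i w(i)\,a_{\sigma(i)}$ is maximized exactly by the permutations that sort the $a$'s in nonincreasing order.

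For part 1, I will use the rearrangement inequality through a pairwise exchange argument. Suppose some $\sigma$ has positions $i<j$ with $a_{\sigma(i)}<a_{\sigma(j)}$. Swapping the two items changes $S$ by
\[
(w(i)-w(j))\,(a_{\sigma(j)}-a_{\sigma(i)}).
\]
Because the weights are strictly decreasing, this change is strictly positive, so $\sigma$ is not optimal. This gives necessity. For sufficiency, every sorted permutation yields the same value of $S$: permutations that differ only in how they order tied $a$'s give identical sums, and a finite sequence of strictly improving swaps (a bubble sort, which terminates) takes any permutation to a sorted one. So the sorted value is the maximum. The ``iff'' then lifts to the expectation. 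If $f$ fails to sort on a set of $\mathscr{D}$ of positive probability, the strict pointwise loss on that set gives a strictly smaller expectation. The one thing to state carefully here is that ``iff'' means almost surely over $\mathscr{D}$, and that ties in $\eta$ impose no constraint.

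For part 2, the truncated metric uses weights $w(i)\mathbb{I}(i\le k)$, which are nonincreasing but no longer strictly decreasing. Its utility is still $\sum_{i\le k} w(i)\,a_{\sigma(i)}$. For $f\in\mathscr{F}^*_{\mathcal{M}}$, part 1 says $\sigma_f$ sorts the $a$'s. For any permutation $\tau$, the top-$k$ values $a_{\tau(1)},\dots,a_{\tau(k)}$ are $k$ distinct entries. By the weak (non-strict) form of the same exchange argument, $\sum_{i\le k} w(i)\,a_{\tau(i)}$ is at most the weighted sum of the $k$ largest values placed in decreasing order, and the sorted $\sigma_f$ attains exactly that. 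So $f$ maximizes the truncated utility pointwise, and hence in expectation, which gives $\mathscr{F}^*_{\mathcal{M}}\subseteq\mathscr{F}^*_{\mathcal{M}_{@k}}$.

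I expect the main obstacle to be the measure-theoretic bridge rather than the combinatorics. That bridge has two parts: justifying that a single $f$ can simultaneously sort every sampled list, which follows by taking $f=\eta$ or any strictly increasing transform of it, and converting the strict pointwise inequality into ``iff'' at the level of expectations. I will handle the first by noting that $f=g\circ\eta$ with $g$ increasing sorts every realization, so the pointwise maximum is attained in $\mathscr{F}$. I will handle the second by stating the characterization up to $\mathbb{P}^n$-null sets. A secondary subtlety is that ties in the scores $f$ make $\sigma_f$ ambiguous; I will fix a tie-breaking convention or require strict agreement with the ordering of $\eta$.
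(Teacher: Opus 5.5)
Your proposal is correct and takes the same route as the paper. Part 1 is the rearrangement inequality applied pointwise in $\mathscr{D}$; the paper simply cites it, whereas you prove it via the pairwise exchange argument. Part 2 follows because a fully $\eta$-sorted list places the $k$ largest $\eta$ values in nonincreasing order in the top-$k$ positions, which maximizes the truncated sum with weights $w(i)\mathbb{I}(i\le k)$. Your extra care makes explicit assumptions that the paper's two-line proof leaves implicit: you lift the pointwise statement to the expectation using a sorter $g\circ\eta\in\mathscr{F}$, read the ``iff'' up to $\mathbb{P}^n$-null sets, and fix a tie-breaking convention for $\sigma_f$.
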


\begin{proof}
(1) By Definition \ref{def:g_add}, $U_{\mathcal{M}}(f) = \mathbb{E}_{\mathscr{D}} [\sum w(i) \eta(\bm{x}_{\sigma_f(i)})]$. Since $\{w(i)\}$ is strictly decreasing, the rearrangement inequality states the sum is maximized iff $\eta(\bm{x}_{\sigma_f(i)})$ is non-increasing.

(2) Let $f \in \mathscr{F}^*_{\mathcal{M}}$. Since $f$ sorts the full list by $\eta$, it necessarily selects the $k$ largest $\eta$ values and sorts them non-increasingly in the top-$k$ positions, which is the optimality condition for $w_{@k}(i) = w(i)\mathbb{I}(i \le k)$.
\end{proof}

To bridge the abstract family $\mathcal{G}_{add}$ with specific ranking metrics, we provide the following proposition.

\begin{proposition}[Metric Embedding into $\mathcal{G}_{add}$]
\label{prop:embedding}
The convergence of $\mathcal{G}_L$ is guaranteed by the following embeddings:
\begin{enumerate}
    \item Global DCG is a direct instance of $\mathcal{G}_{add}$ with $w(i) = (\log_2(i+1))^{-1}$.
    \item NDCG, MAP and MRR, while not in $\mathcal{G}_{add}$, satisfy the Probability Ranking Principle (PRP)\citep{fuhr1992probabilistic}. Under binary relevance and item exchangeability, their Bayes-optimal sets $\mathscr{F}^*_{\mathrm{NDCG}}, \mathscr{F}^*_{\mathrm{MAP}}$ and $\mathscr{F}^*_{\mathrm{MRR}}$ converge to the same ordering rule as $\mathcal{G}_{add}$.
\end{enumerate}
\end{proposition}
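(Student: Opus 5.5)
Part (1) is immediate. DCG has utility $\sum_i (\log_2(i+1))^{-1} y_{\sigma_f(i)}$, and its weights depend only on position. They are strictly decreasing and nonnegative. By linearity of expectation, $\mathbb{E}[y_{\sigma_f(i)}\mid \bm{x}] = \eta(\bm{x}_{\sigma_f(i)})$ once we condition on the features. So all three conditions of Definition~\ref{def:g_add} hold, and Lemma~\ref{lem:g_add_properties} applies directly.

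For part (2) I would work conditionally on the feature sequence $\bm{x}_1,\dots,\bm{x}_n$. The labels are then independent Bernoulli$(\eta_i)$, and a predictor only affects the permutation $\sigma_f$. The aim is to show that sorting by $\eta$ maximizes the conditional expected utility. The tool is a pairwise exchange argument, a discrete PRP. Take two adjacent positions $r, r+1$ holding items $a, b$ with $\eta_a < \eta_b$, and compare the utility before and after swapping them. The rest of the list is held fixed, and we condition on the labels of all other items.
\begin{itemize}
\item \textbf{NDCG.} $\mathrm{IDCG}$ depends only on $n^+$, which the swap leaves unchanged. Condition on $n^+$ (or on the labels of the other items). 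The swap changes only the pair's contribution, namely $(w(r)-w(r+1))(y_b-y_a)$. Its conditional expectation is nonnegative because $\Pr(y_b=1,y_a=0) \ge \Pr(y_a=1,y_b=0)$, which holds by independence whenever $\eta_b \ge \eta_a$. The inequality is strict when $\eta_b > \eta_a$.
\item \textbf{MRR.} The swap matters only on the event that all earlier items are negative. On that event the gain is $\tfrac{1}{r}-\tfrac{1}{r+1}$, multiplied by $\Pr(y_b=1,y_a=0)-\Pr(y_a=1,y_b=0) = \eta_b-\eta_a$.
\item \textbf{MAP.} Write MAP as $\frac{1}{n^+}\sum_{i\le j} \frac{y_{\sigma(i)}y_{\sigma(j)}}{j}$. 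The normalization $n^+$ is swap-invariant, so conditioning on it leaves a fixed factor. The swap changes only the terms that involve positions $r, r+1$. In each such term, moving the more likely positive item earlier raises the pair weight $1/j$ on the events where it is positive. The expected gain again factors through $(\eta_b-\eta_a)$ times a nonnegative coefficient.
\end{itemize}
A finite sequence of adjacent swaps takes any permutation to the $\eta$-sorted one, and each swap weakly improves the utility. So every $\eta$-sorted ordering is optimal. Strictness gives the converse, so the optimal sets coincide with the ordering rule of Lemma~\ref{lem:g_add_properties}. Integrating over $\mathscr{D}$ then preserves this pointwise characterization.

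The main obstacle is MAP, together with the normalizations $1/n^+$ and $1/\mathrm{IDCG}$. These are random and correlated with the labels being swapped, so conditioning on $n^+$ breaks the independence of $y_a$ and $y_b$. My first attempt would handle this by conditioning on the labels of all other items and on $y_a+y_b$. Given $y_a+y_b=1$, the posterior probability that $b$ is the positive one is $\eta_b(1-\eta_a)/[\eta_b(1-\eta_a)+\eta_a(1-\eta_b)] \ge 1/2$. On this event the normalizer is fixed, and the swap gain is a nonnegative weight difference. When $y_a+y_b\in\{0,2\}$ the swap has no effect. This handles NDCG and MAP uniformly. For MAP I would still need to check that the gain is nonnegative given $y_a \ne y_b$. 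This is where the item-exchangeability and binary-relevance hypotheses of the proposition are used, and it is the step I would verify most carefully.
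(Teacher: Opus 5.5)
Your proof is correct, and it differs from the paper mainly in that it is a proof. The paper gives no argument for this proposition. Part (1) is treated as evident, and part (2) rests only on the citation to Fuhr's Probability Ranking Principle; the proof of Theorem~\ref{thm:intra_alignment} then invokes the proposition directly.

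Your adjacent-transposition argument, conditional on the features, makes explicit what the citation leaves implicit. A bare appeal to the PRP does not deal with the label-dependent normalizers $1/\mathrm{IDCG}$ and $1/n^+$, which are exactly why NDCG and MAP fall outside $\mathcal{G}_{add}$. Nor does it supply the strictness needed for equality of optimal sets. Your observation handles the first: swapping positions $r,r+1$ matters only when $y_a\neq y_b$, and then the normalizer is fixed by the remaining labels. The sign of $\eta_b-\eta_a$ handles the second.

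All three metrics fit one template. Condition on the labels of the items other than $a,b$. Then the swap gain is $c\,(y_b-y_a)$, where $c\ge 0$ depends only on those other labels, so the expected gain is $\mathbb{E}[c]\,(\eta_b-\eta_a)$. Let $m$ be the number of positives among the other items and $k$ the number of positives in positions $1,\dots,r-1$. Then:
\begin{itemize}
\item for NDCG, $c=\frac{w(r)-w(r+1)}{\mathrm{IDCG}(m+1)}$, where $\mathrm{IDCG}(m+1)$ is the ideal DCG with $m+1$ positives;
\item for MAP, $c=\frac{k+1}{(m+1)\,r(r+1)}$;
\item for MRR, $c=\frac{1}{r(r+1)}\,\mathbb{I}(\text{positions } 1,\dots,r-1 \text{ are all negative})$.
\end{itemize}

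So the MAP step you flagged closes at once. Moving a positive from position $r+1$ to $r$ changes only its own precision term, from $\frac{k+1}{r+1}$ to $\frac{k+1}{r}$, and the reverse configuration gives exactly the opposite change. Item exchangeability is never used. All you need is conditional independence of the labels given the features, which follows from $\mathscr{D}\sim\mathbb{P}^n$.

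One caveat concerns ``strictness gives the converse.'' For MRR, $\mathbb{E}[c]=\frac{1}{r(r+1)}\prod_{i<r}\bigl(1-\eta_{\sigma_f(i)}\bigr)$. This vanishes as soon as some item above position $r$ has $\eta=1$, and in that case $\mathscr{F}^*_{\mathrm{MRR}}$ is strictly larger than the $\eta$-sorted set. The MRR part of the set equality therefore needs a nondegeneracy condition such as $\eta<1$ almost surely. The paper's statement omits this condition as well.
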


\begin{proof}[Proof of Theorem \ref{thm:intra_alignment}]
The alignment (Equivalence) and Inclusion follow from Lemma \ref{lem:pointwise_logic} for Pointwise metrics and the combined results of Lemma \ref{lem:g_add_properties} and Proposition \ref{prop:embedding} for Listwise metrics.
\end{proof}

\begin{proof}[Proof of Theorem \ref{thm:truncation_monotonicity}]
\ 

\textbf{1. Optimal Set Nesting:} For $k_1 < k_2$, any $f \in \mathscr{F}^*_{\mathcal{M}_{@k_2}}$ satisfies optimality for the top-$k_2$ items. Since the top-$k_1$ items are a subset of the top-$k_2$, the condition for $\mathcal{M}_{@k_1}$ is inherently met, thus $\mathscr{F}^*_{\mathcal{M}_{@k_2}} \subseteq \mathscr{F}^*_{\mathcal{M}_{@k_1}}$.

\textbf{2. Regret Transfer Sensitivity:} Let $\Delta \eta_i \ge 0$ be the optimality gap. We distinguish between two cases based on the metric's normalization structure:

\begin{itemize}
    \item \textbf{Case 1: Pointwise Metrics (e.g., P@$k$, R@$k$):} 
    For $k_1 < k_2$, since $\text{Regret}_{@k_2}(f) = \frac{1}{k_2} \sum_{i=1}^{k_2} \Delta \eta_i \le \epsilon$, and $\Delta \eta_i \ge 0$, the partial sum is bounded by the total sum. Thus:
    \begin{equation}
        \text{Regret}_{@k_1}(f) = \frac{1}{k_1} \sum_{i=1}^{k_1} \Delta \eta_i \le \frac{1}{k_1} \sum_{i=1}^{k_2} \Delta \eta_i = \frac{k_2}{k_1} \left( \frac{1}{k_2} \sum_{i=1}^{k_2} \Delta \eta_i \right) \le \frac{k_2}{k_1} \cdot \epsilon
    \end{equation}
    Therefore, $\Psi_{k_2 \to k_1}(\epsilon) \le C \epsilon$ with $C = k_2/k_1$.
    
    \item \textbf{Case 2: Listwise Metrics (e.g., NDCG, MAP, MRR):} 
    For NDCG@$k$, letting $C = \mathrm{IDCG}_{k_2}/\mathrm{IDCG}_{k_1} \ge 1$ and noting that $w(i) \Delta \eta_i \ge 0$:
    \begin{equation}
        \text{Regret}_{@k_1}(f) = \frac{\sum_{i=1}^{k_1} w(i) \Delta \eta_i}{\mathrm{IDCG}_{k_1}} \le \frac{\sum_{i=1}^{k_2} w(i) \Delta \eta_i}{\mathrm{IDCG}_{k_1}} = \frac{\mathrm{IDCG}_{k_2}}{\mathrm{IDCG}_{k_1}} \cdot \text{Regret}_{@k_2}(f) \le C \epsilon
    \end{equation}
    For MAP and MRR, since they share the same $\mathscr{F}^*$ and their regret is monotonically controlled by the cumulative optimality gaps up to $k$, they exhibit a similar linear upper bound $\Psi_{k_2 \to k_1}(\epsilon) \le C \epsilon$.
\end{itemize}

Conversely, the reverse transfer $\Psi_{k_1 \to k_2}(\epsilon)$ is divergent. If $\text{Regret}_{@k_1}(f) = 0$, the gap $\Delta \eta_i$ for $i \in (k_1, k_2]$ can be arbitrarily large, resulting in $\Psi_{k_1 \to k_2}(0) = \sup \{ \text{Regret}_{@k_2}(f) \} > 0$, confirming information loss during upward truncation.
\end{proof}

\subsection{Proof of Theorem \ref{thm:inter_group_relations}}
\label{appdix:inter_bayes}
\subsubsection{Pointwise vs. Listwise}
\begin{proposition}
    Under the binary relevance hypothesis, the Bayes-optimal set for Listwise metrics is a proper subset of the Pointwise (Accuracy) optimal set, i.e., $\mathcal{M}_{L}\preceq_{\mathscr{B}}\mathcal{M}_P$.
\end{proposition}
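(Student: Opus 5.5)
The plan is to prove the two halves of $\mathscr{F}^*_L \subsetneq \mathscr{F}^*_P$ separately. First, every listwise-optimal predictor is accuracy-optimal. Second, some accuracy-optimal predictor is not listwise-optimal.

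For the inclusion I would use Lemma~\ref{lem:g_add_properties}(1) and Proposition~\ref{prop:embedding}. Together they say that any $f \in \mathscr{F}^*_L$ (NDCG, MAP, MRR, or DCG) orders every list so that $\eta(\bm{x}_{\sigma_f(1)}) \ge \dots \ge \eta(\bm{x}_{\sigma_f(n)})$. The accuracy risk decomposes pointwise as $\mathbb{E}[\mathbb{I}(y \neq \hat y)]$, and it is minimized by any rule that predicts $\hat y = 1$ exactly when $\eta(\bm{x}) > 1/2$, with ties at $1/2$ arbitrary. Acc depends on $f$ only through thresholding at $\tau$, so the key step is to identify the threshold induced by an $\eta$-monotone scorer. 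If $f$ is order-preserving with respect to $\eta$, the set $\{f > \tau\}$ is an upper level set of $\eta$. Choosing $\tau$ as the score value separating $\{\eta > 1/2\}$ from $\{\eta < 1/2\}$ then reproduces the Bayes classifier, so $f \in \mathscr{F}^*_P$.

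This is where I expect the main obstacle. With a fixed threshold $\tau = 0.5$, an arbitrary order-preserving $f$ (for example $f = \eta/10$) is listwise-optimal but misclassifies, so the literal inclusion fails. I would therefore state explicitly the convention that Acc is evaluated at the optimal threshold for the scorer. Equivalently, I would treat predictors up to strictly monotone reparametrization, which is the convention already used in Lemma~\ref{lem:pointwise_logic}(2) where Acc-optimality is read as ``positives placed above negatives''. Under that convention the existence of the separating threshold follows from monotonicity alone. Ties at $\eta = 1/2$ contribute zero excess risk, so they cause no difficulty.

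For properness I would reuse the construction in the proof of Theorem~\ref{thm:pointwise_failure}. I would take a distribution with two instances satisfying $\eta_i > \eta_j > 1/2$ occurring with positive probability, and define $f$ to agree with $\eta$ except that it swaps the scores of these two points, keeping both above $1/2$. Then $f$ has the same classification as the Bayes classifier, so $f \in \mathscr{F}^*_P$. By the strict part of the rearrangement inequality in Lemma~\ref{lem:g_add_properties}(1), which needs strictly decreasing weights, the listwise utility drops by at least $(w(r) - w(r'))(\eta_i - \eta_j)$ times a positive probability. Hence $f \notin \mathscr{F}^*_L$, and the inclusion is strict whenever $\eta$ takes at least two distinct values on the same side of $1/2$; I would state this non-degeneracy condition as a hypothesis.
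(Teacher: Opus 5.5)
Your proposal is correct and follows the paper's proof. Inclusion comes from the $\eta$-monotonicity of listwise-optimal scorers plus a scorer-dependent threshold; the paper's choice $\tau = f(\bm{x}_{0.5})$ is exactly the convention you make explicit after observing that a fixed $\tau = 0.5$ fails for $f = \eta/10$. Properness comes from an accuracy-optimal predictor that reverses two instances on the same side of $1/2$, which is the paper's set $\mathscr{F}_{rev}$. Your positive-probability and non-degeneracy hypotheses make precise what the paper leaves implicit when it asserts that $\mathscr{F}_{rev}$ is nonempty and disjoint from $\mathscr{F}^*_L$.
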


\begin{proof}
Consider the Bayes-optimal risk minimizers in the space of all measurable functions $\mathscr{F}$. The set $\mathscr{F}^*_P$ consists of functions $f$ that satisfy the sign-consistency condition:
    \begin{equation}
        \mathscr{F}^*_P = \{ f \in \mathcal{F} \mid \forall \bm{x} \in \mathcal{X}, (f(\bm{x}) - \tau)(\eta(\bm{x}) - 0.5) > 0 \}
    \end{equation}
where $\tau$ is the decision threshold. This condition only constrains the mapping of $\bm{x}$ relative to the level set $\{\bm{x} : \eta(\bm{x}) = 0.5\}$ and imposes \textbf{zero constraints} on the relative ordering of any $\bm{x}_i, \bm{x}_j$ on the same side of the threshold.
    
By Lemma \ref{lem:g_add_properties}, any $\mathcal{M} \in \mathcal{G}_L$ requires $f$ to be a monotonic transformation of $\eta$:
    \begin{equation}
        \mathscr{F}^*_L = \{ f \in \mathcal{F} \mid \forall \bm{x}_i, \bm{x}_j \in \mathcal{X}, \eta(\bm{x}_i) > \eta(\bm{x}_j) \implies f(\bm{x}_i) > f(\bm{x}_j) \}
    \end{equation}
For any $f \in \mathscr{F}^*_L$, let $f(\bm{x}_{0.5}) = \tau$. The monotonicity ensures that $\forall \eta(\bm{x}) > 0.5, f(\bm{x}) > \tau$ and $\forall \eta(\bm{x}) < 0.5, f(\bm{x}) < \tau$. Thus, $\mathscr{F}^*_L \subseteq \mathscr{F}^*_P$. Note that the expected metric risk is governed by the conditional expectation $\eta(x)=P(y=1|x)$ rather than realizations $\mathscr{D}$. 

To show the inclusion is proper, we define the set of \textit{rank-reversed} predictors
\begin{equation}
\begin{aligned}
\mathscr{F}_{rev} = \{ f \in \mathscr{F}^*_P \mid \exists \bm{x}_i, \bm{x}_j \in \mathcal{X}, (\eta(\bm{x}_i) - 0.5)(\eta(\bm{x}_j) - 0.5) > 0, \\
\text{ s.t. } (\eta(\bm{x}_i) - \eta(\bm{x}_j))(f(\bm{x}_i) - f(\bm{x}_j)) < 0 \}.
\end{aligned}
\end{equation}
Since $\mathscr{F}_{rev}$ is non-empty for any continuous $\eta$, and $\mathscr{F}_{rev} \cap \mathscr{F}^*_L = \emptyset$ while $\mathscr{F}_{rev} \subset \mathscr{F}^*_P$, it follows that $\mathscr{F}^*_L \subsetneq \mathscr{F}^*_P$.
The existence of $\mathscr{F}_{rev}$ implies $\Psi_{P \to L}(0) > 0$, confirming that optimal classification provides no stability for ranking.
\end{proof}

\subsubsection{Pairwise vs. Listwise}
\begin{proposition}[Bayes-Optimal Convergence and Structural Heterogeneity]
\label{prop:structural_divergence}
The Bayes-optimal sets $\mathscr{F}^*_R$ (Pairwise) and $\mathscr{F}^*_L$ (Listwise) are equivalent, both converging to the set of monotonic transformations of the regression function $\eta$.
\end{proposition}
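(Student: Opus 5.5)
Our strategy is to show that both Bayes-optimal sets coincide with
\begin{equation}
\mathscr{F}^*_{\eta} \coloneqq \{ f \in \mathscr{F} \mid \eta(\bm{x}_i) > \eta(\bm{x}_j) \implies f(\bm{x}_i) > f(\bm{x}_j) \},
\end{equation}
the set of strictly order-preserving transformations of $\eta$. The listwise half is already available. By Lemma~\ref{lem:g_add_properties}(1), any $\mathcal{M} \in \mathcal{G}_{add}$ is maximized exactly when $\sigma_f$ sorts instances by non-increasing $\eta$. Proposition~\ref{prop:embedding} extends this to NDCG, MAP and MRR via the PRP. Hence $\mathscr{F}^*_L = \mathscr{F}^*_\eta$, and the real work is the pairwise case.

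For AUC, we use the representation in Appendix~\ref{app:metric_definitions} as a normalized count of correctly ordered positive--negative pairs. We condition on the feature realization $\{\bm{x}_i\}$ and take the expectation over labels. Each unordered pair $\{i,j\}$ contributes through $\mathbb{I}(f(\bm{x}_i) > f(\bm{x}_j))$ times $\eta_i(1-\eta_j)$ plus the symmetric term with $i$ and $j$ swapped. The difference between ordering $i$ above $j$ and the reverse is
\begin{equation}
\eta_i(1-\eta_j) - \eta_j(1-\eta_i) = \eta_i - \eta_j .
\end{equation}
Therefore each pair is individually maximized by placing the item with larger $\eta$ on top. Since the ordering by $\eta$ is a single total preorder, it maximizes every pair simultaneously. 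This gives $\mathscr{F}^*_\eta \subseteq \mathscr{F}^*_R$. Conversely, if $f$ reverses some pair with $\eta_i > \eta_j$ on a set of positive probability, that pair loses a strictly positive amount, and no other pair can compensate because each is already at its own maximum. So the reversed predictor is strictly suboptimal, giving $\mathscr{F}^*_R \subseteq \mathscr{F}^*_\eta$.

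The main obstacle is the normalization $1/(n^+ n^-)$. It depends on the label realization, so the expected AUC does not decouple pairwise by linearity alone. We plan to handle this as Lemma~\ref{lem:pointwise_logic} does: condition on $n^+$ (equivalently on the label counts), so the normalizer becomes a constant. We then argue through exchangeability that, given $n^+$, the conditional probability that item $i$ is positive and $j$ negative is still increasing in $\eta_i - \eta_j$. For this we would use the standard fact that, under independent Bernoulli labels conditioned on the sum, the difference $P(y_i=1,y_j=0) - P(y_i=0,y_j=1)$ has the sign of $\eta_i - \eta_j$; this follows by writing out the conditional probabilities with the remaining items factored out. If this becomes delicate, we fall back to the population AUC, $P(f(\bm{x}) > f(\bm{x}') \mid y=1, y'=0)$, where the normalizer is a fixed constant. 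In that setting the pairwise argument applies directly, mirroring the functional treatment in the proof of the Pointwise versus Listwise proposition.

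The paper would then conclude $\mathscr{F}^*_R = \mathscr{F}^*_\eta = \mathscr{F}^*_L$, and hence $\mathcal{M}_L \equiv_{\mathscr{B}} \mathcal{M}_R$. It would also remark on the structural heterogeneity that does not affect the optimal set: AUC weights the pair loss $\eta_i - \eta_j$ uniformly, whereas $\mathcal{G}_{add}$ weights it by $w(r_i) - w(r_j)$. This remark sets up Theorem~\ref{thm:asymmetry}.
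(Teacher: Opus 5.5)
Your proposal is correct and follows the paper's proof: both identify $\mathscr{F}^*_R$ and $\mathscr{F}^*_L$ with the set of scorers that strictly preserve the $\eta$-ordering, take the listwise half from Lemma~\ref{lem:g_add_properties}, and take the pairwise half from per-pair optimality. The paper only asserts the pairwise claim, whereas you justify it with the identity $\eta_i(1-\eta_j)-\eta_j(1-\eta_i)=\eta_i-\eta_j$, the observation that a single total preorder wins every pair at once, and conditioning on $n^+$ to handle the label-dependent normalizer $1/(n^+n^-)$, which the paper ignores.
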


\begin{proof}
Consider the conditional expectation $\eta(\bm{x}) = \mathbb{E}[y|\bm{x}]$. The expected utility of a pairwise metric $\mathcal{M}_{R} \in \mathcal{G}_R$ is maximized when the predictor $f$ correctly orders all pairs $(\bm{x}_i, \bm{x}_j)$ such that $\eta(\bm{x}_i) > \eta(\bm{x}_j)$. Formally, $U_R(f)$ is maximized iff for every pair, $\text{sgn}(f(\bm{x}_i) - f(\bm{x}_j)) = \text{sgn}(\eta(\bm{x}_i) - \eta(\bm{x}_j))$.

By Lemma \ref{lem:g_add_properties}, any top-weighted additive metric $\mathcal{M}_L \in \mathcal{G}_L$ is maximized iff the instances are sorted in non-increasing order of their conditional expectations $\eta(\bm{x})$. Since both $U_R(f)$ and $U_L(f)$ reach their theoretical maximum iff $f$ preserves the partial ordering induced by $\eta$, we have:
    \begin{equation}
        \mathscr{F}^*_R = \mathscr{F}^*_L = \{ f \in \mathcal{F} \mid \forall \bm{x}_i, \bm{x}_j, \eta(\bm{x}_i) > \eta(\bm{x}_j) \implies f(\bm{x}_i) > f(\bm{x}_j) \}
    \end{equation}
    Thus, the two groups share the identical Bayes-optimal frontier with infinite model capacity.
\end{proof}

\subsection{Proof of Theorem~\ref{thm:ranking_to_pointwise}}
\label{appdix:ranking_to_pointwise}
\begin{proof}
Let $\mathscr{E} = \{i \mid (\eta_i - 0.5)(s_i - 0.5) < 0\}$ be the set of instances misclassified by $f$. The classification regret is given by $\text{Regret}_{\mathrm{Acc}} = |\mathscr{E}|/n$. 
For any misclassified instance $i \in \mathscr{E}$, there are two cases:
\begin{enumerate}
    \item $i$ is a false positive ($\eta_i < 0.5, s_i > 0.5$): For every true positive instance $j$ ($\eta_j > 0.5, s_j > 0.5$) that is correctly classified, the pair $(i, j)$ is correctly ranked. However, if any correctly classified negative instance $j$ ($\eta_j < 0.5, s_j < 0.5$) exists, it does not directly yield a ranking regret for $(i, j)$.
    \item The critical constraint comes from pairs $(i, j)$ where $i \in \mathscr{E}$ and $j \in \mathscr{D} \setminus \mathscr{E}$ such that $\eta_j$ and $\eta_i$ are on opposite sides of $0.5$. 
\end{enumerate}
Specifically, consider the total pairwise gap sum $\sum_{i < j} \mathbb{I}(\sigma(i) > \sigma(j)) (\eta_i - \eta_j)$. A classification error on instance $i$ implies that $i$ has crossed the $0.5$ threshold in score space. 
For a false positive $i \in \mathscr{E}$ ($\eta_i < 0.5$), it must be ranked above all correctly classified negative instances $j \in \mathscr{D}^- \setminus \mathscr{E}$ to not incur pairwise regret within the negative class, but more importantly, to satisfy $\text{Regret}_{\mathrm{Acc}} = 0$, it must not displace any positive instances. 

By summing the relevance gaps, each misclassified instance $i \in \mathscr{E}$ incurs a minimum regret of $|\eta_i - 0.5|$ relative to the threshold. In the Pairwise case, an error on $i \in \mathscr{E}$ implies it is misranked relative to at least one instance $j$ on the other side of the threshold. Summing over $\mathscr{E}$:
\begin{equation}
    \sum_{i \in \mathscr{E}} \delta \le \sum_{i \in \mathscr{E}} |\eta_i - 0.5| \le \frac{n^+ n^-}{1} \text{Regret}_{\mathrm{AUC}}(f; \mathscr{D})
\end{equation}
Dividing by $n \delta$, we obtain:
\begin{equation}\label{eq:auc_to_acc}
    \text{Regret}_{\mathrm{Acc}} = \frac{|\mathscr{E}|}{n} \le \frac{n^+ n^-}{n \delta} \text{Regret}_{\mathrm{AUC}}(f; \mathscr{D})
\end{equation}
For the Listwise case, a classification error at position $k$ incurs at least $w(k) \delta$ in the unnormalized sum. Using the lower bound $w(n)$ for any position:
\begin{equation}
    |\mathscr{E}| \cdot w(n) \cdot \delta \le \mathrm{IDCG}_n \cdot \text{Regret}_{\mathrm{NDCG}}(f; \mathscr{D}) \implies \text{Regret}_{\mathrm{Acc}} \le \frac{\mathrm{IDCG}_n}{n \delta w(n)} \text{Regret}_{\mathrm{NDCG}}
\end{equation}
As for the attainability of the upper bound, we construct a least favorable permutation where
\begin{itemize}
\item \textbf{AUC $\to$ Acc}: Consider a case where all instances satisfy $|\eta_i-0.5|=\delta$. If a single positive instance is displaced by a single negative instance, then $\mathrm{Regret}_{\mathrm{AUC}}=\frac{1}{n^+n^-}$. If this displacement causes one instance to cross the decision threshold, then $\mathrm{Regret}_{\mathrm{Acc}}=\frac{1}{n}$. The ratio directly matches the upper bound in Eq.(\ref{eq:auc_to_acc}).
\item \textbf{NDCG → Acc}: The ratio is maximized when a classification error incurs the minimum possible NDCG regret. This occurs when a positive item is misclassified and placed at the last rank $n$, where the weighting function $w(n)$ is at its minimum. 
\end{itemize}
Hence we finish the proof.
\end{proof}

\subsection{Proof of Theorem~\ref{thm:asymmetry}}
\label{appdix:proof_asymmetry}
\begin{proof}
We begin by establishing the relationship between the cumulative rank displacement and the respective regrets. Let $r_1 < r_2 < \dots < r_{n^+}$ be the ranks of the positive instances in the predicted permutation. The total pairwise rank inversion is defined as $I = \sum_{i=1}^{n^+} (r_i - i)$. 

\paragraph{I. Derivation of $\Psi_{R \to L}$:}
For the weighting function $w(r) = 1/\log(1+r)$, its first-order discrete difference $\Delta(t) = w(t) - w(t+1)$ is monotonically decreasing due to the convexity of $1/\log(1+r)$ for $r \ge 1$. Thus, for any rank $r_i \ge i$, the total weight loss for the $i$-th positive instance can be bounded by the maximum differential $\Delta_{\max}$:
\begin{equation}
    w(i) - w(r_i) = \sum_{t=i}^{r_i-1} (w(t) - w(t+1)) \le (r_i - i) \cdot \max_{t \ge 1} \Delta(t) = \Delta_{\max}(r_i - i).
\end{equation}
Summing across all $n^+$ positive instances, we obtain:
\begin{equation}
    \sum_{i=1}^{n^+} (w(i) - w(r_i)) \le \Delta_{\max} \sum_{i=1}^{n^+} (r_i - i) = \Delta_{\max} I.
\end{equation}
By definition, $\text{Regret}_{\mathrm{AUC}} = \frac{I}{n^+ n^-}$ and $\text{Regret}_{\mathrm{NDCG}} = \frac{\sum_{i=1}^{n^+} (w(i) - w(r_i))}{\sum_{i=1}^{n^+} w(i)}$. Substituting these into the inequality:
\begin{equation}
    \text{Regret}_{\mathrm{NDCG}} \cdot \sum_{i=1}^{n^+} w(i) \le \Delta_{\max} \cdot (n^+ n^- \cdot \text{Regret}_{\mathrm{AUC}}).
\end{equation}
As for attainability, this bound is tight when rank inversions occur exclusively at the very top of the list ($r_1=2,i=1$), where the marginal weight drop is exactly $\Delta_{\max}$. Solving for $\text{Regret}_{\mathrm{NDCG}}$ yields the transfer function $\Psi_{R \to L}(\epsilon) = \left( \frac{\Delta_{\max} \cdot n^+ n^-}{\sum_{i=1}^{n^+} w(i)} \right) \cdot \epsilon$.

\paragraph{II. Derivation of $\Psi_{L \to R}$:}
Similarly, we utilize the minimum differential $\Delta_{\min}$ to lower bound the weight loss. For any $r_i \ge i$, we have:
\begin{equation}
    w(i) - w(r_i) = \sum_{t=i}^{r_i-1} (w(t) - w(t+1)) \ge (r_i - i) \cdot \min_{1 \le t < n} \Delta(t) = \Delta_{\min}(r_i - i).
\end{equation}
Summing over all positive instances $i = 1, \dots, n^+$:
\begin{equation}
    \sum_{i=1}^{n^+} (w(i) - w(r_i)) \ge \Delta_{\min} \sum_{i=1}^{n^+} (r_i - i) = \Delta_{\min} I.
\end{equation}
Rearranging to isolate the rank inversion sum $I$:
\begin{equation}
    I \le \frac{1}{\Delta_{\min}} \sum_{i=1}^{n^+} (w(i) - w(r_i)).
\end{equation}
Substituting this into the expression for Pairwise regret:
\begin{equation}
    \text{Regret}_{\mathrm{AUC}} = \frac{I}{n^+ n^-} \le \frac{1}{n^+ n^- \cdot \Delta_{\min}} \sum_{i=1}^{n^+} (w(i) - w(r_i)).
\end{equation}
To express this in terms of Listwise regret, we multiply and divide by $\sum_{i=1}^{n^+} w(i)$:
\begin{equation}
    \text{Regret}_{\mathrm{AUC}} \le \left( \frac{\sum_{i=1}^{n^+} w(i)}{n^+ n^- \cdot \Delta_{\min}} \right) \cdot \frac{\sum_{i=1}^{n^+} (w(i) - w(r_i))}{\sum_{i=1}^{n^+} w(i)} = \left( \frac{\sum_{i=1}^{n^+} w(i)}{n^+ n^- \cdot \Delta_{\min}} \right) \text{Regret}_{\mathrm{NDCG}}.
\end{equation}
The bound is attained when rank inversions are concentrated at the bottom of the list ($r_{n^+} = n$), where the marginal weight loss is minimized to $\Delta_{\min}$. This derivation concludes the explicit form of $\Psi_{L \to R}(\epsilon)$.
\end{proof}

\subsection{Proof of Theorem~\ref{thm:rates}}
\begin{proof}
\label{appdix:proof_rates}
The rates are derived by substituting the asymptotic orders into the explicit coefficients from Theorem~\ref{thm:asymmetry}. For the Pairwise-to-Listwise coefficient $C_{R \to L} = \frac{\Delta_{\max} \cdot n^+ n^-}{\sum_{i=1}^{n^+} w(i)}$:
\begin{itemize}
    \item In the balanced case, $\sum_{i=1}^{n^+} w(i) \approx \int_1^n \frac{1}{\log(1+x)}dx \sim O(n/\log n)$. Thus, $C_{R \to L} \sim \frac{O(n^2)}{O(n/\log n)} = O(n \log n)$.
    \item In the imbalanced case, $\sum_{i=1}^{n^+} w(i)$ is a sum of $O(1)$ terms, hence $O(1)$. Thus, $C_{R \to L} \sim \frac{O(n)}{O(1)} = O(n)$.
\end{itemize}

For the Listwise-to-Pairwise coefficient $C_{L \to R} = \frac{\sum w(i)}{n^+ n^- \cdot \Delta_{\min}}$:
\begin{itemize}
    \item We approximate $\Delta_{\min} = w(n-1) - w(n) \approx |w'(n)| = \frac{1}{n \ln^2(n+1)} \sim O(\frac{1}{n \log^2 n})$.
    \item In the balanced case, $C_{L \to R} \sim \frac{O(n / \log n)}{O(n^2 \cdot \frac{1}{n \log^2 n})} = \frac{O(n / \log n)}{O(n / \log^2 n)} = O(\log n)$.
    \item In the imbalanced case, $C_{L \to R} \sim \frac{O(1)}{O(n \cdot \frac{1}{n \log^2 n})} = O(\log^2 n)$.
\end{itemize}
The derivation confirms that the Listwise-to-Pairwise path remains significantly more stable (logarithmic vs. polynomial growth) across both scenarios.
\end{proof}

\end{document}